\Crefname{figure}{Fig.}{Figs.}
\Crefname{equation}{Eq.}{Eq.}
\theoremstyle{plain}
\newtheorem{theorem}{Theorem}[section]
\newtheorem{lemma}[theorem]{Lemma}
\theoremstyle{definition}
\newtheorem{definition}[theorem]{Definition}
\theoremstyle{remark}
\newcommand{\yaom}[1]{\textcolor{black}{#1}}
\newcommand{\skx}[1]{\textcolor{black}{#1}}
\title{Probabilistic Modeling: Proving the Lottery Ticket Hypothesis in Spiking Neural Network}
\author{Man Yao$^{1,2,3}$\thanks{Equal contribution. manyao@stu.xjtu.edu.cn; yhjldl@stu.xjtu.edu.cn}, Yuhong Chou$^{4,2*}$, Guangshe Zhao$^{1}$, Xiawu Zheng$^{3}$, \\ \textbf{Yonghong Tian}$^{5,3}$, \textbf{Bo Xu}$^{2}$, \textbf{Guoqi Li}$^{2}$\thanks{Corresponding author, guoqi.li@ia.ac.cn} \\
~\\
$^{1}$School of Automation Science and Engineering, Xi'an Jiaotong University, Xi'an, Shaanxi, China\\
$^{2}$Institute of Automation, Chinese Academy of Sciences, Beijing, China\\
$^{3}$Peng Cheng Laboratory, Shenzhen, Guangzhou, China\\
$^{4}$College of Artificial Intelligence, Xi'an Jiaotong University, Xi'an, Shaanxi, China\\
$^{5}$Institute for Artificial Intelligence, Peking University, Beijing, China\\
}
\begin{document}

\maketitle

\begin{abstract}
The Lottery Ticket Hypothesis (LTH) states that a randomly-initialized large neural network contains a small sub-network (i.e., winning tickets) which, when trained in isolation, can achieve comparable performance to the large network. LTH opens up a new path for network pruning. Existing proofs of LTH in Artificial Neural Networks (ANNs) are based on continuous activation functions, such as ReLU, which satisfying the Lipschitz condition. However, these theoretical methods are not applicable in Spiking Neural Networks (SNNs) due to the discontinuous of spiking function. We argue that it is possible to extend the scope of LTH by eliminating Lipschitz condition. Specifically, we propose a novel probabilistic modeling approach for spiking neurons with complicated spatio-temporal dynamics. Then we theoretically and experimentally prove that LTH holds in SNNs. According to our theorem, we conclude that pruning directly in accordance with the weight size in existing SNNs is clearly not optimal. We further design a new criterion for pruning based on our theory, which achieves better pruning results than baseline. 
\end{abstract}

\section{Introduction}\label{sec:intro}
By mimicking the spatio-temporal dynamics behaviors of biological neural circuits, Spiking Neural Networks (SNNs) \cite{Maass_1997_LIF,gerstner2014neuronal,Nature_2,yao2022attention} provide a low-power alternative to traditional Artificial Neural Networks (ANNs). The binary spiking communication enables SNNs to be deployed on neuromorphic chips \cite{davies2018loihi,Nature_1,schuman_2022_opportunities} to perform sparse synaptic accumulation for low energy consumption. Given the memory storage limitations of such devices, neural pruning methods are well recognized as one of the crucial methods for implementing SNNs in real-world applications. Pruning redundant weights from an over-parameterized model is a mature and efficient way of obtaining significant compression \cite{han2015learning,hoefler2021sparsity}. 

Recently, Lottery Ticket Hypothesis (LTH), a mile stone is proposed in the literature of network pruning, which asserts that an over-parameterized neural network contains sub-networks that can achieve a similar or even better accuracy than the fully-trained original dense networks by \emph{training only once} \cite{frankle2018the}. A Stronger version of the LTH (SLTH) was then proposed: there is a high probability that a network with random weights contains sub-networks that can approximate any given sufficiently-smaller neural network, \emph{without any training} \cite{ramanujan2020s}. \yaom{SLTH claims to find the target sub-network without training, thus it is a kind of complement to original LTH that requires training \cite{ramanujan2020s}. Meanwhile, SLTH is considered to be ``stronger” than LTH because it claims to require no training \cite{malach2020proving}.} 

\yaom{The effectiveness of LTH in ANNs have been verified by a series of experiments \cite{frankle2018the,zhou2019deconstructing,wang2020pruning,ramanujan2020s}. Furthermore, it has been theoretically proved by several works with various assumptions \cite{malach2020proving,orseau2020logarithmic,pensia2020optimal,da2022proving}, due to its attractive properties in statement.} A line of work is dedicated to designing efficient pruning algorithms based on LTH \cite{you2019drawing,girish2021lottery,chen2020lottery}. But the role of LTH in SNN is almost blank. There is only one work that experimentally verifies that LTH can be applied to SNNs \cite{LTH_SNN_2022}, whether it can also be theoretically established remains unknown. 

\yaom{In this work, we theoretically and experimentally prove that LTH (Strictly speaking, SLTH\footnote{In theoretical proofs, SLTH is considered a stronger version of LTH\cite{malach2020proving, da2022proving}} ) holds in SNNs.} There are two main hurdles. First and foremost, the binary spike signals are fired when the membrane potentials of the spiking neurons exceed the firing threshold, thus the activation function of a spiking neuron is discrete. But all current works \cite{malach2020proving,orseau2020logarithmic,pensia2020optimal,da2022proving} proving LTH in ANNs relies on the Lipschitz condition. Only when the Lipschitz condition is satisfied, the error between the two layers of the neural network will be bounded when they are approximated. Second, brain-inspired SNNs have complex dynamics in the temporal dimension, which is not considered in the existing proofs and increases the difficulty of proving LTH in SNNs.

To bypass the Lipschitz condition, we design a novel probabilistic modeling approach. The approach can theoretically provide the probability that two SNNs behave identically, which is not limited by the complex spatio-temporal dynamics of SNNs. In a nutshell, we establish the following result:

\textbf{Informal version of \Cref{theorem:LTH}} For any given target SNN $\hat{G}$, there is a sufficiently large SNN $G$ with a sub-network (equivalent SNN) $\tilde{G}$ that, with a high probability, can achieve the same output as $\hat{G}$ for the same input,
$$
\mathrm{sup}_{\boldsymbol{S} \in \mathcal{S}}\frac{1}{T} \sum_{t=1}^{T}\left\|\tilde{G}^{t}(\boldsymbol{S}) - \hat{G}^{t}(\boldsymbol{S})\right\|_{2} = 0,
$$
where $t=1,2,\cdots,T$ is the timestep,  $\boldsymbol{S}$ is the input spiking tensor containing only 0 or 1.

\yaom{As a complement to the theoretical proof part, we show experimentally that there are also sub-networks in SNNs that can work without training. Subsequently, it is clear that ANNs and SNNs operate on fundamentally distinct principles about how weight influences the output of the activation layer.} Thus, simply using the weight size as the criterion for pruning \yaom{like ANNs} must not be the optimal strategy. Based on this understanding, we design a new weight pruning criterion for SNNs. We evaluate how likely weights are to affect the firing of spiking neurons, and prune according to the estimated probabilities. In summary, our contributions are as follows:
\begin{itemize}
\item We propose a novel probabilistic modeling method for SNNs that for the first time theoretically establishes the link between spiking firing and pruning (\Cref{sec:modeling}). The probabilistic modeling method is a general theoretical analysis tool for SNNs, and it can also be used to analyze the robustness and other compression methods of SNNs (\Cref{sec:algorithm}).
\item With the probabilistic modeling method, we theoretically prove that LTH also holds in SNNs with binary spiking activations and complex spatio-temporal dynamics (\Cref{sec:prove_LTH}).
\item We experimentally find the good sub-networks without weight training in random initialized SNNs, which is consistent with our theoretical results. (\Cref{sec:exp_verify})
\item We apply LTH for pruning in SNNs and design a new probability-based pruning criterion for it (\Cref{sec:algorithm}). The proposed pruning criterion can achieve better performance in LTH-based pruning methods and can also be exploited in non-LTH traditional pruning methods.
\end{itemize}

\section{Related Work}
\textbf{Spiking neural networks.} The spike-based communication paradigm and event-driven nature of SNNs are key to their energy-efficiency advantages \cite{Nature_2,Deng_2020_rethink_ann_snn,yao_2021_TASNN,yao2022attention}. Spike-based communication makes cheap synaptic Accumulation (AC) the basic operation unit, and the event-driven nature means that only a small portion of the entire network is active at any given time while the rest is idle. In contrast, neurons in ANNs communicate information using continuous values, so Multiply-and-Accumulate (MAC) is the major operation, and generally all MAC must be performed even if all inputs or activations are zeros. SNNs can be deployed on neuromorphic chips for low energy consumption \cite{Nature_1,davies2018loihi,rao2022long}. Thus, spike-based neuromorphic computing has broad application prospects in battery constrained edge computing platforms \cite{davies2021advancing}, e.g., internet of things, smart phones, etc.


\textbf{Pruning in spiking neural networks.} Recent studies on SNN pruning have mostly taken two approaches: 1) Gaining knowledge from ANN pruning's successful experience 2) incorporating the unique biological properties of SNNs. The former technical route is popular and effective. Some typical methods include pruning according to predefined threshold value \cite{neftci2016stochastic,rathi2018stdp,nguyen2021connection}, soft-pruning that training weights and pruning thresholds concurrently \cite{shi2019soft}, etc. The temporal dynamics of SNN are often also taken into consideration in the design of pruning algorithms \cite{guo2020unsupervised,LTH_SNN_2022}. Meanwhile, There have been attempts to develop pruning algorithms based on the similarities between SNNs and neural systems, e.g., regrowth process \cite{kundu2021spike}, spine motility \cite{kappel2015network,bellec2018long}, gradient rewiring \cite{chen2021pruning}, state transition of dendritic spines \cite{chen2022state}, etc. None of these studies, however, took into account the crucial factor that affects network performance: the link between weights and spiking firing. In this work, we use probabilistic modeling to analyze the impact of pruning on spiking firing, and accordingly design a pruning criterion for SNNs.

\begin{figure*}[t]
\begin{center}
\centerline{\includegraphics[width=\linewidth]{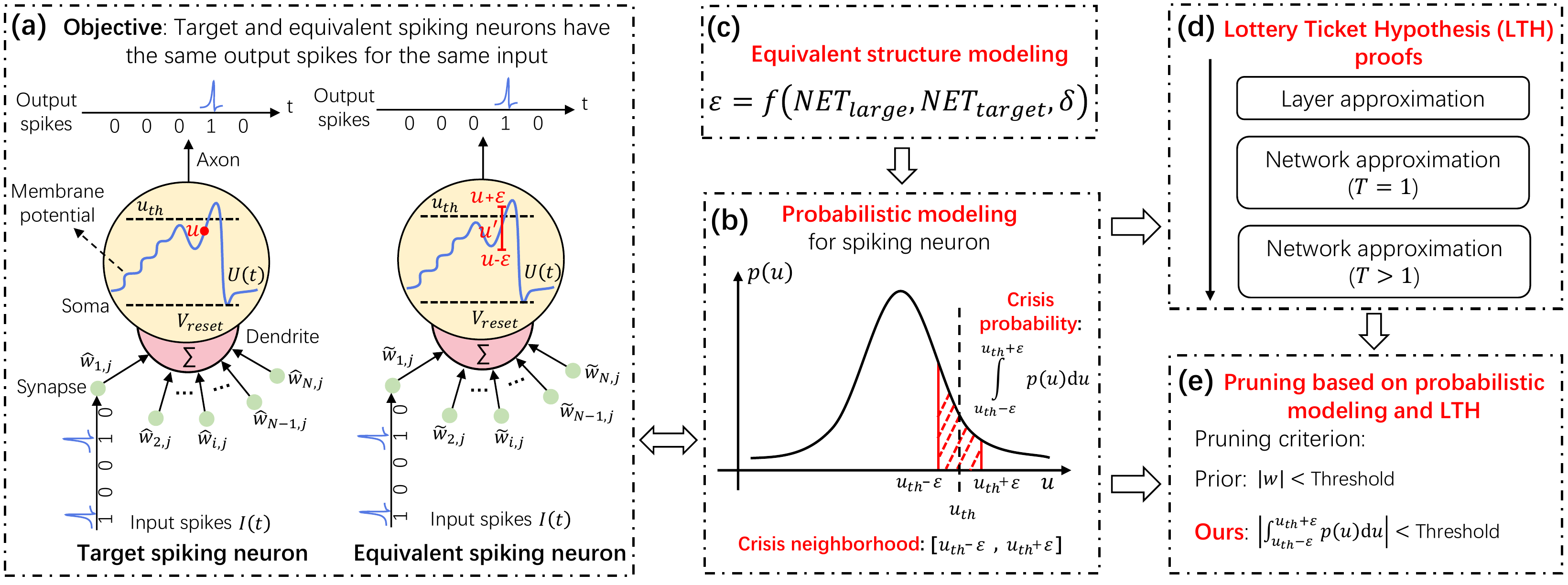}}
\vspace{-2mm}
\caption{\textbf{Overview of this work.} \textbf{(a)}, The goal of spiking neuron approximate. \textbf{(b)}, The firing behavior in the approximation of spiking neurons can be described by the proposed probabilistic modeling approach (\Cref{sec:modeling}). Spikes are fired when the membrane potential $u$ exceeds the firing threshold $u_{th}$. As long as a weight change induces $u$ to fall into the crisis neighborhood, there is a certain probability that the spiking firing will be changed (0 to 1, or 1 to 0). We hope that spiking firing will not change after the redundant weights are pruned. \textbf{(c)}, Equivalent structure modeling. The error between the target and equivalent SNN is related to the network width (\Cref{sec:prove_LTH}). \textbf{(d)}, Proof of LTH in SNN (\Cref{sec:prove_LTH}). \textbf{(e)}, We provide a pruning technique for SNNs (\Cref{sec:algorithm}). New pruning criterion: compute the probability that the firing of a spiking neuron changes when weights are pruned, and pruning according to the rank of the probability.}
\label{fig:abstract}
\end{center}
\vskip -0.3in
\end{figure*}

\section{Probabilistic Modeling for Spiking Neurons}\label{sec:modeling}
No matter how intricate a spiking neuron's internal spatio-temporal dynamics are, ultimately the neuron decide whether to fire a spike or not based on its membrane potential at a specific moment. Thus, the essential question of the proposed probabilistic modeling is how likely is the firing of a spiking neuron to change after changing a weight.

\textbf{Leaky Integrate-and-Fire (LIF)} is one of the most commonly used spiking neuron models \cite{Maass_1997_LIF,gerstner2014neuronal}, because it is a trade-off between the complex spatio-temporal dynamic characteristics of biological neurons and the simplified mathematical form. It can be described by a differential function
\begin{equation}
    \tau\frac{du(t)}{dt}=-u(t)+I(t),  \label{eq:continuous LIF model}
\end{equation}
where $\tau$ is a time constant, and $u(t)$ and $I(t)$ are the membrane potential of the postsynaptic neuron and the input collected from presynaptic neurons, respectively. Solving \Cref{eq:continuous LIF model}, a simple iterative representation of the LIF neuron \cite{Wu_STBP_2018,Neftci_SG_2019} for easy inference and training can be obtained as follow
\begin{align}
\label{eq:mem}
u_{i}^{t, l} &= h_{i}^{t-1, l} + x_{i}^{t, l},\\ \label{eq:fire}
s_{i}^{t, l} &= \operatorname{Hea}(u_{i}^{t, l} - u_{th}),\\\label{eq:temporal_dynamic}
h_{i}^{t, l} &= V_{reset}s_{i}^{t, l} + \beta u_{i}^{t, l}(1-s_{i}^{t, l}),\\
x_{i}^{t, l}&=\sum_{j=1}^{N}w_{ij}^{l}s^{t, l-1}_{j}, 
\label{eq:lif_neuron} 
\end{align}
where $u^{t, l}_{i}$ means the membrane potential of the $i$-th neuron in $l$-th layer at timestep $t$, which is produced by coupling the spatial input feature $x^{t, l}_{i}$ and temporal input $h_{i}^{t-1, l}$, $u_{th}$ is the threshold to determine whether the output spike $s^{t, l}_{i}\in \{0, 1\}$ should be given or stay as zero, $\operatorname{Hea}(\cdot)$ is a Heaviside step function that satisfies $\operatorname{Hea}(x)=1$ when $x\geq0$, otherwise $\operatorname{Hea}(x)=0$, $V_{reset}$ denotes the reset potential which is set after activating the output spiking, and $\beta = e^{-\frac{d t}{\tau}} < 1$ reflects the decay factor. In \Cref{eq:mem}, spatial feature $x_{i}^{t, l}$ can be extracted from the spike $s^{t, l-1}_{j}$ from the spatial output of the previous layer through a linear or convolution operation (i.e., \Cref{eq:lif_neuron}), where the latter can also be regraded as a linear operation \cite{chen2020comprehensive}. $w^{l}_{ij}$ denotes the weight connect from the $j$-th neuron in $(l-1)$-th layer to the $i$-th neuron in $l$-th layer, $N$ indicates the width of the $(l-1)$-th layer. 

The spatio-temporal dynamics of LIF can be described as: the LIF neuron integrates the spatial input feature $x^{t, l}_{i}$ and the temporal input $h_{i}^{t-1, l}$ into membrane potential $u^{t, l}_{i}$, then, the fire and leak mechanism is exploited to generate spatial output $s^{t, l}_{i}$ and the new neuron state $h^{t, l}_{i}$ for the next timestep. Specifically, When $u^{t, l}_{i}$ is greater than the threshold $u_{th}$, a spike is fired and the neuron state $h^{t, l}_{i}$ is reset to $V_{reset}$. Otherwise, no spike is fired and the neuron state is decayed to $\beta u_{i}^{t, l}$. Richer neuronal dynamics \cite{gerstner2014neuronal} can be obtained by adjusting information fusion \cite{yaoglif}, threshold \cite{shaban2021adaptive}, decay \cite{fang2021incorporating}, or reset \cite{diehl2015fast} mechanisms, etc. Note, notations used in this work are summarized in \Cref{app_sec:notation}.

\textbf{Probabilistic modeling of spiking neurons.} Our method is applicable to various spiking neuron models as long as \Cref{eq:fire} is satisfied. In this section, superscript $l$ and $i$ will be omitted when location is not discussed, and superscript $t$ will be omitted when the specific timestep is not important. The crux of probabilistic modeling is how errors introduced by pruning alter the firing of spiking neurons.

We start by discussing only spatial input features. For a spiking neuron, assuming that the temporal input of a certain timestep is fixed, for two different spatial input features $x$ and $x^{\prime} \in [x-\epsilon, x+\epsilon]$, the corresponding membrane potentials are also different, i.e., $u$ and $u^{\prime} \in [u-\epsilon, u+\epsilon]$. Once $u$ and $u^{\prime}$ are located in different sides of the threshold $u_{th}$, the output will be different (see \Cref{fig:abstract}\textbf{a}). This situation can only happen when $u$ is located in \emph{crisis neighborhood} $[u_{th}-\epsilon, u_{th}+\epsilon]$ (see \Cref{fig:abstract}\textbf{b}). That is, suppose membrane potential $u \notin [u_{th}-\epsilon, u_{th}+\epsilon]$, if it changes from $u$ to $u^{\prime}$, and $u^{\prime}\in[u-\epsilon, u+\epsilon]$, the output of the spiking neuron must not change. Consequently, the probability upperbound of a spiking neuron output change (\emph{crisis probability}) is:
\begin{equation}
\label{eq:upper bound}
    \int_{u_{th}-\epsilon}^{u_{th}+\epsilon}p(u)\textrm{d}u,
\end{equation}
where $p(\cdot)$ is the probability density function of the membrane potential distribution. For the case of two independent spiking neurons, if the input is the same, then $\epsilon$ is controlled by the weights of the two neurons (\Cref{fig:abstract}\textbf{a}).

It is reasonable to consider that the membrane potential follows a certain probability distribution. The membrane potential is accumulated by temporal input and spatial input feature, the former can be regarded as a random variable, and the latter is determined by the input spikes and weights. The input spike is a binary random variable according to a firing rate, and usually the weights are also assumed to satisfy a certain distribution. Moreover, some existing works directly assume that the membrane potential satisfies the Gaussian distribution \cite{zheng2021going,guo2022recdis}. In this work, our assumptions about the membrane potential distribution are rather relaxed. Specifically, we give out a class of distributions for membrane potential, and we suppose all membrane potential should satisfy:
\begin{definition}
\label{def:vNFD}
\textbf{m-Neighborhood-Finite Distribution.} For a probability density function $p(\cdot)$ and a value $m$, if \skx{there} exists $\epsilon \skx{>0} $ for the neighborhood $\left[m-\epsilon, m+\epsilon \right]$, in this interval, the max value of function $p(\cdot)$ is finite, we call the distribute function $p(\cdot)$ as m-Neighborhood-Finite Distribution. The symbolic language is as follows
\begin{equation}
\exists \epsilon \skx{>0}, x\in \left[m-\epsilon, m+\epsilon \right], \mathop{sup}\limits_{x}p(x) < +\infty.
\end{equation}
\end{definition}

The upperbound of probability is controllable by controlling the input error $\epsilon$:
\begin{lemma}
\label{lemma:upper bound}
For the probability density function $p(\cdot)$, if it is m-Neighborhood-Finite Distribution, for any $\delta \skx{>0} $, there exists \skx{a constant} $\epsilon \skx{>0} $ \skx{so} that$\int_{m-\epsilon}^{m+\epsilon}p(x)\mathrm{dx} \leq \delta$.
\end{lemma}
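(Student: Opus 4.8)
The plan is to bound the integral crudely by $(\text{length of interval}) \times (\text{supremum of } p \text{ on that interval})$, and then shrink the interval until the product falls below $\delta$. The entire content sits in Definition~\ref{def:vNFD}, which already hands us a neighborhood of $m$ on which $p$ is bounded; the rest is routine $\epsilon$-management.

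Concretely, I would proceed in three steps. First, apply Definition~\ref{def:vNFD} to the density $p(\cdot)$ at the point $m$: there exists $\epsilon_0 > 0$ such that $M := \sup_{x \in [m-\epsilon_0,\, m+\epsilon_0]} p(x) < +\infty$, and of course $M \ge 0$ since $p$ is a probability density. Second, given an arbitrary $\delta > 0$, choose
\begin{equation}
\epsilon := \min\!\left(\epsilon_0,\ \frac{\delta}{2(M+1)}\right) > 0 .
\end{equation}
Because $\epsilon \le \epsilon_0$, we have the inclusion $[m-\epsilon,\, m+\epsilon] \subseteq [m-\epsilon_0,\, m+\epsilon_0]$, hence $p(x) \le M$ for every $x$ in $[m-\epsilon,\, m+\epsilon]$. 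Third, estimate
\begin{equation}
\int_{m-\epsilon}^{m+\epsilon} p(x)\,\mathrm{d}x \;\le\; \int_{m-\epsilon}^{m+\epsilon} M \,\mathrm{d}x \;=\; 2\epsilon M \;\le\; \frac{\delta M}{M+1} \;<\; \delta,
\end{equation}
which is exactly the claimed bound.

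I do not expect any genuine obstacle here; the proof is elementary. The only two points requiring a little care are (i) keeping $\epsilon$ inside the neighborhood $[m-\epsilon_0, m+\epsilon_0]$ where finiteness of the supremum is guaranteed, which is why the choice of $\epsilon$ takes a minimum with $\epsilon_0$; and (ii) avoiding division by zero in the degenerate case $M = 0$ (e.g.\ when $p$ vanishes on a neighborhood of $m$), which is handled by using $M+1$ rather than $M$ in the denominator. If one prefers, the case $M=0$ can instead be dispatched separately by simply taking $\epsilon = \epsilon_0$, since then the integral is $0 \le \delta$ outright.
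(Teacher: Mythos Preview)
Your proposal is correct and follows essentially the same approach as the paper: bound the integral by $2\epsilon \cdot \sup p$ on the neighborhood and then choose $\epsilon$ small enough. Your version is in fact more careful than the paper's, since you explicitly ensure $\epsilon \le \epsilon_0$ so the supremum bound applies, and you handle the degenerate case $M=0$; the paper's proof omits both points (and even writes the final condition as $\epsilon \le \frac{p_{sup}}{2\delta}$, which is inverted).
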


Now, we add consideration of the temporal dynamics of spiking neurons. As shown in \Cref{eq:mem} and \Cref{eq:temporal_dynamic}, spiking neurons have a memory function that the membrane potential retains the neuronal state information from all previous timesteps. The spatial input feature $x$ is only related to the current input, and the temporal input $h$ is related to the spatial input features at all previous timesteps. For the convenience of mathematical expression, if there is no special mention to consider the temporal dynamics inside the neuron, we directly write the spiking neuron as $\sigma$. But it should be noted that $\sigma$ actually has complex internal dynamics. In its entirety, a spiking neuron in \Cref{eq:fire} can be denoted as $\sigma^{t}_{x^{1:t-1}}(x^{t})$, where the temporal input $h^{t-1}$ in the membrane potential $u^{t}$ depends on $x^{1:t-1}$, i.e., spatial input features from 1 to $t-1$. 

Using \Cref{def:vNFD}, \Cref{lemma:upper bound}, and the math above, we can determine the probability of differing outputs from the two neurons due to errors in their spatial input features, under varying constraints.Specifically, in \Cref{lemma:singleEFR}, it is assumed that the timestep is fixed and the temporal inputs to the two neurons are the same; in \Cref{lemma:multiTEFR}, we loosen the constraint on the timestep of the two neurons; finally, in \Cref{theorem:theorem_in_chapter_3}, we generalize our results to arbitrary timesteps for two spiking layers ($N$ neurons each layer).

\begin{lemma}
\label{lemma:singleEFR}
At a certain temestep $T$, if the spiking neurons are $u_{th}$-Neighborhood-Finite Distribution and the spatial input features of two neuron got an error upperbound $\epsilon$, and they got the same temporal input $h^{t-1}$, the probability upperbound of different outputs is proportional to $\epsilon$. Formally: 

For two spiking neurons $\hat{\sigma}^{T}$ and $\tilde{\sigma}^{T}$, when $\tilde{h}^{T-1}=\hat{h}^{T-1}$ and $\hat{u}^{T}=\hat{h}^{T-1} + \hat{x}^{T}$ is a random variable follows the $u_{th}$-Neighborhood-Finite Distribution, if $\|\tilde{x}^{T} - \hat{x}^{T}\|\leq \epsilon$, then:
$P\left[\hat{\sigma}^{T}(\hat{x}^{T})\neq\tilde{\sigma}^{T}(\tilde{x}^{T}) \right]\propto \epsilon $
\end{lemma}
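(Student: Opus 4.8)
The plan is to reduce the probabilistic claim to a deterministic inclusion of events and then quantify it with \Cref{def:vNFD} and \Cref{lemma:upper bound}. First I would rewrite the perturbed membrane potential in terms of the reference one. Because the two neurons share the temporal input, $\tilde{h}^{T-1}=\hat{h}^{T-1}$, we have $\tilde{u}^{T}=\tilde{h}^{T-1}+\tilde{x}^{T}=\hat{u}^{T}+(\tilde{x}^{T}-\hat{x}^{T})$, and the hypothesis $\|\tilde{x}^{T}-\hat{x}^{T}\|\le\epsilon$ gives $|\tilde{u}^{T}-\hat{u}^{T}|\le\epsilon$ surely. The two outputs are $\hat{\sigma}^{T}=\operatorname{Hea}(\hat{u}^{T}-u_{th})$ and $\tilde{\sigma}^{T}=\operatorname{Hea}(\tilde{u}^{T}-u_{th})$, so only their position relative to $u_{th}$ matters.

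Next I would prove the key deterministic fact: if $\hat{u}^{T}\notin[u_{th}-\epsilon,u_{th}+\epsilon]$ then $\hat{\sigma}^{T}=\tilde{\sigma}^{T}$. Indeed, if $\hat{u}^{T}>u_{th}+\epsilon$ then $\tilde{u}^{T}\ge\hat{u}^{T}-\epsilon>u_{th}$, so both neurons fire; if $\hat{u}^{T}<u_{th}-\epsilon$ then $\tilde{u}^{T}\le\hat{u}^{T}+\epsilon<u_{th}$, so neither fires. Taking the contrapositive, the event of disagreement is contained in the crisis-neighborhood event, $\{\hat{\sigma}^{T}(\hat{x}^{T})\neq\tilde{\sigma}^{T}(\tilde{x}^{T})\}\subseteq\{\hat{u}^{T}\in[u_{th}-\epsilon,u_{th}+\epsilon]\}$, whence
$$P\!\left[\hat{\sigma}^{T}(\hat{x}^{T})\neq\tilde{\sigma}^{T}(\tilde{x}^{T})\right]\le P\!\left[\hat{u}^{T}\in[u_{th}-\epsilon,u_{th}+\epsilon]\right]=\int_{u_{th}-\epsilon}^{u_{th}+\epsilon}p(u)\,\mathrm{d}u,$$
with $p$ the density of $\hat{u}^{T}$; this is exactly the crisis-probability upper bound of \Cref{eq:upper bound}.

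Finally I would quantify the right-hand side. By \Cref{def:vNFD} applied at $m=u_{th}$, there is $\epsilon_{0}>0$ with $M:=\sup_{u\in[u_{th}-\epsilon_{0},u_{th}+\epsilon_{0}]}p(u)<+\infty$; then for every error bound $\epsilon\le\epsilon_{0}$ the interval $[u_{th}-\epsilon,u_{th}+\epsilon]$ lies inside that neighborhood, so $\int_{u_{th}-\epsilon}^{u_{th}+\epsilon}p(u)\,\mathrm{d}u\le 2M\epsilon$, giving $P[\hat{\sigma}^{T}(\hat{x}^{T})\neq\tilde{\sigma}^{T}(\tilde{x}^{T})]\le 2M\epsilon$, i.e.\ a constant multiple of $\epsilon$, which is the asserted proportionality. (Equivalently, one may combine the display above with \Cref{lemma:upper bound} to conclude that for any target $\delta>0$ there is an $\epsilon$ making the disagreement probability at most $\delta$.)

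I expect no substantial obstacle here; the only care needed is bookkeeping. The bound $2M\epsilon$ is meaningful only once $\epsilon$ is no larger than the neighborhood radius $\epsilon_{0}$ furnished by \Cref{def:vNFD}; for larger $\epsilon$ one falls back on the trivial bound $P[\cdot]\le 1$, which is harmless since the regime of interest is $\epsilon\to 0$. The boundary value of $\operatorname{Hea}$ at $u_{th}$ is immaterial because $\hat{u}^{T}$ has a density, so the open/closed distinction on the crisis interval has probability zero. And the argument uses nothing about the perturbation $\tilde{x}^{T}-\hat{x}^{T}$ beyond its $\epsilon$-bound, so it holds whether that perturbation is stochastic or adversarial. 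Strictly, what is established is the one-sided estimate $P[\cdot]=O(\epsilon)$, which is how the symbol $\propto$ in the statement should be read.
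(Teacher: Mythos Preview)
Your proposal is correct and follows essentially the same approach as the paper: rewrite $\tilde{u}^{T}=\hat{u}^{T}+(\tilde{x}^{T}-\hat{x}^{T})$, observe that disagreement forces $\hat{u}^{T}$ into the crisis neighborhood $[u_{th}-\epsilon,u_{th}+\epsilon]$, and bound the resulting integral by $2p_{\sup}\epsilon$ using the $u_{th}$-Neighborhood-Finite assumption. If anything, your write-up is more careful than the paper's, since you spell out the deterministic inclusion explicitly and note the restriction $\epsilon\le\epsilon_{0}$ needed for the $2M\epsilon$ bound to apply.
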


\begin{lemma}
\label{lemma:multiTEFR}
Suppose the spiking neurons are $u_{th}$-Neighborhood-Finite Distribution at timestep $T$ and the spatial input features of two corresponding spiking neurons got an error upperbound $\epsilon$ at any timestep, and they got the same temporal input $h^{0}$, if both spiking neurons have the same output at the first $T-1$ timesteps, then the probability upperbound is proportional to $\frac{\epsilon}{1 - \beta}$. Formally: 

For two spiking neurons $\hat{\sigma}^{T}$ and $\tilde{\sigma}^{T}$, when $\tilde{h}^{0}=\hat{h}^{0}$ and $\hat{u}^{T}=\hat{h}^{T-1} + \hat{x}^{T}$ is a random variable follows the $u_{th}$-Neighborhood-Finite Distribution, if $\|\tilde{x}^{t} - \hat{x}^{t}\|\leq \epsilon$ and $\hat{\sigma}^{t}(\hat{x}^{t})=\hat{\sigma}^{t}(\tilde{x}^{t})$ for $t = 1, 2, \cdots, T-1$, then:
$P\left[\hat{\sigma}^{T}(\hat{x}^{T})\neq\tilde{\sigma}^{T}(\tilde{x}^{T}) \right]\propto \frac{\epsilon}{1 - \beta}$. 

\end{lemma}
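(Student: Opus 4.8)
The plan is to reduce \Cref{lemma:multiTEFR} to \Cref{lemma:singleEFR} by controlling how the discrepancy between the two neurons' temporal states propagates through the LIF recursion. Write $e^{t} := \tilde{h}^{t} - \hat{h}^{t}$ for the state error; by hypothesis $e^{0}=0$. Since the two neurons emit the same spike $s^{t}$ for every $t=1,\dots,T-1$, I would use \Cref{eq:mem} and \Cref{eq:temporal_dynamic} to write the recursion for $e^{t}$ in closed form: if $s^{t}=1$ then $\hat{h}^{t}=\tilde{h}^{t}=V_{reset}$, so $e^{t}=0$; if $s^{t}=0$ then $h^{t}=\beta(h^{t-1}+x^{t})$, so $e^{t}=\beta e^{t-1}+\beta(\tilde{x}^{t}-\hat{x}^{t})$ with $\|\tilde{x}^{t}-\hat{x}^{t}\|\leq\epsilon$.

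Next I would unroll this recursion. Because a spike at any step resets the error to zero, the worst case over spike patterns is that no spike fires on $1,\dots,T-1$, which gives $\|e^{T-1}\|\leq \epsilon\sum_{k=1}^{T-1}\beta^{k}\leq \epsilon\beta/(1-\beta)$; here $\beta<1$ is what makes the geometric sum finite and, crucially, $T$-independent. Then at timestep $T$ we have $\tilde{u}^{T}-\hat{u}^{T}=e^{T-1}+(\tilde{x}^{T}-\hat{x}^{T})$, hence $\|\tilde{u}^{T}-\hat{u}^{T}\|\leq \epsilon\beta/(1-\beta)+\epsilon=\epsilon/(1-\beta)=:\epsilon'$. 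So $\tilde{u}^{T}\in[\hat{u}^{T}-\epsilon',\hat{u}^{T}+\epsilon']$, which places us exactly in the setting of \Cref{lemma:singleEFR} but with the effective input-error radius $\epsilon'$ in place of $\epsilon$.

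Finally I would invoke the crisis-neighborhood argument: the outputs $\hat{\sigma}^{T}(\hat{x}^{T})$ and $\tilde{\sigma}^{T}(\tilde{x}^{T})$ differ only if $\hat{u}^{T}$ and $\tilde{u}^{T}$ lie on opposite sides of $u_{th}$, which forces $\hat{u}^{T}\in[u_{th}-\epsilon',u_{th}+\epsilon']$. Since $\hat{u}^{T}$ follows a $u_{th}$-Neighborhood-Finite Distribution with density $p$, \Cref{eq:upper bound} together with \Cref{lemma:upper bound} bounds the probability of this event by $\int_{u_{th}-\epsilon'}^{u_{th}+\epsilon'}p(u)\,\mathrm{d}u \leq 2\epsilon'\sup_{u}p(u) = \tfrac{2\epsilon}{1-\beta}\sup_{u}p(u)$, which is $\propto \epsilon/(1-\beta)$ as claimed (taking $\epsilon$ small enough that $[u_{th}-\epsilon',u_{th}+\epsilon']$ sits inside the neighborhood on which $p$ has finite supremum, as guaranteed by \Cref{def:vNFD}).

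The main obstacle I anticipate is the bookkeeping around the reset mechanism: one must argue that intermediate spikes only help — they zero out the accumulated error — so that the geometric series $\sum_{k\geq1}\beta^{k}$ is genuinely the worst case, and the telescoping must be carried out so that no factor of $T$ leaks into the bound $\epsilon'=\epsilon/(1-\beta)$. A secondary subtlety is making the proportionality constant $\sup_{u}p(u)$ uniform, i.e. checking that shrinking $\epsilon$ keeps $[u_{th}-\epsilon',u_{th}+\epsilon']$ within a single neighborhood on which $p$ is bounded; this is precisely what \Cref{def:vNFD} and \Cref{lemma:upper bound} are designed to provide.
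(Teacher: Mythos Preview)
Your proposal is correct and follows essentially the same approach as the paper: track the temporal-state error through the LIF recursion, observe that spikes reset it to zero so the worst case is the all-no-spike geometric series $\sum_{k\geq 0}\beta^{k}\epsilon=\epsilon/(1-\beta)$, and then apply the crisis-neighborhood bound from \Cref{lemma:singleEFR} with the inflated radius $\epsilon'=\epsilon/(1-\beta)$. The only cosmetic difference is that you parameterize the recursion via $e^{t}=\tilde{h}^{t}-\hat{h}^{t}$ whereas the paper works directly with $\hat{u}^{t}-\tilde{u}^{t}$; the two are related by a factor of $\beta$ and yield the identical final bound $2p_{\sup}\,\epsilon/(1-\beta)$.
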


\begin{theorem}
\label{theorem:theorem_in_chapter_3}
Suppose the spiking layers are $u_{th}$-Neighborhood-Finite Distribution at timestep $t$ and the inputs of two corresponding spiking layers with a width $N$ got an error upperbound $\epsilon$ for each element of spatial input feature vector at any timestep, and they got the same initial temporal input vector $\boldsymbol{h^{0}}$, if there is no different spiking output at the first $T-1$ timesteps, then the probability upperbound is proportional to ${N}\frac{\epsilon}{1 - \beta}$. Formally: 

For two spiking layers $\hat{\sigma}^{T}$ and $\tilde{\sigma}^{T}$, when $\boldsymbol{\tilde{h}^{0}}=\boldsymbol{\hat{h}^{0}}$ and $\boldsymbol{\hat{u}^{t}}=\boldsymbol{\hat{h}^{t-1}} + \boldsymbol{\hat{x}^{t}}$ is a random variable follows the $u_{th}$-Neighborhood-Finite Distribution, if $\|\tilde{x}^{t}_{k} - \hat{x}^{t}_{k}\|\leq \epsilon$ ($k=1, 2, \cdots, N; i=1, 2, \cdots, T$) and $\hat{\sigma}^{t}(\boldsymbol{\hat{x}^{t}})=\hat{\sigma}^{t}(\boldsymbol{\tilde{x}^{t}})$ for $t = 1, 2, \cdots, T-1$, then:
$P\left[\hat{\sigma}^{T}(\boldsymbol{\hat{x}^{T}})\neq\tilde{\sigma}^{T}(\boldsymbol{\tilde{x}^{T}}) \right]\propto {N}\frac{\epsilon}{1 - \beta}$. 
\end{theorem}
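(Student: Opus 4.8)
The plan is to lift \Cref{lemma:multiTEFR} from a single spiking neuron to a layer of $N$ neurons by a union-bound (Boole) argument over coordinates, so essentially no new probabilistic machinery is needed beyond the single-neuron lemma. First I would observe that the layer outputs $\hat{\sigma}^{T}(\boldsymbol{\hat{x}^{T}})$ and $\tilde{\sigma}^{T}(\boldsymbol{\tilde{x}^{T}})$ are vectors in $\{0,1\}^{N}$, so they disagree if and only if at least one coordinate disagrees. Writing $E_{k} = \{\hat{\sigma}^{T}_{k}(\hat{x}^{T}_{k}) \neq \tilde{\sigma}^{T}_{k}(\tilde{x}^{T}_{k})\}$ for the event that neuron $k$ flips, we have the exact decomposition $\{\hat{\sigma}^{T}(\boldsymbol{\hat{x}^{T}}) \neq \tilde{\sigma}^{T}(\boldsymbol{\tilde{x}^{T}})\} = \bigcup_{k=1}^{N} E_{k}$, and hence $P[\hat{\sigma}^{T}(\boldsymbol{\hat{x}^{T}}) \neq \tilde{\sigma}^{T}(\boldsymbol{\tilde{x}^{T}})] \le \sum_{k=1}^{N} P[E_{k}]$.

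Next I would check that each scalar neuron $k$ inherits exactly the hypotheses of \Cref{lemma:multiTEFR}. The layer-level assumption $\boldsymbol{\tilde{h}^{0}} = \boldsymbol{\hat{h}^{0}}$ gives $\tilde{h}^{0}_{k} = \hat{h}^{0}_{k}$ coordinatewise; the per-element error bound gives $\|\tilde{x}^{t}_{k} - \hat{x}^{t}_{k}\| \le \epsilon$ for every $t$; the hypothesis that there is no different spiking output in the first $T-1$ timesteps at the layer level forces each coordinate's output to agree on $t = 1,\dots,T-1$; and $\hat{u}^{t}_{k}$ is assumed to follow a $u_{th}$-Neighborhood-Finite Distribution. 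Therefore \Cref{lemma:multiTEFR} applies to neuron $k$ and yields $P[E_{k}] \le C\,\tfrac{\epsilon}{1-\beta}$, where $C$ is the constant produced through \Cref{lemma:upper bound} from the finite supremum of the membrane-potential density near $u_{th}$; since the $u_{th}$-Neighborhood-Finite property is posited uniformly for the layer, this $C$ can be taken independent of $k$. Summing over $k$ then gives $P[\hat{\sigma}^{T}(\boldsymbol{\hat{x}^{T}}) \neq \tilde{\sigma}^{T}(\boldsymbol{\tilde{x}^{T}})] \le N\,C\,\tfrac{\epsilon}{1-\beta}$, i.e. the claimed proportionality to $N\tfrac{\epsilon}{1-\beta}$.

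The step I expect to require the most care is the conditioning bookkeeping, not the union bound itself (which needs no independence between the neurons of a layer — convenient, since they share the presynaptic spike vector). \Cref{lemma:multiTEFR} bounds neuron $k$'s crisis probability given that neuron $k$ matched on timesteps $1,\dots,T-1$, whereas in the layer statement we condition on the \emph{whole} layer matching on those timesteps, a strictly smaller event. I would handle this by noting that the bound on $P[E_{k}]$ ultimately comes from integrating the membrane-potential density over the crisis neighborhood $[u_{th}-\epsilon', u_{th}+\epsilon']$ via \Cref{lemma:upper bound}, which is a worst-case estimate over the conditional law of the temporal input $\hat{h}^{T-1}_{k}$; restricting to the stronger conditioning event only reshapes that conditional law and cannot exceed the same supremum-of-density bound, so the estimate survives. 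Making this "monotonicity under extra conditioning" argument precise — together with verifying the per-timestep error propagation constant $1/(1-\beta)$ is unaffected by passing from one neuron to $N$ parallel neurons — is the only genuinely nontrivial point; everything else is the routine union-bound-plus-inheritance-of-hypotheses argument sketched above.
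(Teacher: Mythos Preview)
Your proposal is correct and follows essentially the same approach as the paper's own proof: apply \Cref{lemma:multiTEFR} to each of the $N$ neurons individually to get a per-neuron bound of the form $\tfrac{2p_{sup}}{1-\beta}\epsilon$, then use the union bound over coordinates to obtain the factor of $N$. Your discussion of the conditioning bookkeeping is in fact more careful than the paper's (very terse) argument, which simply takes $p_{sup}$ as the supremum density over all entries and applies the union bound without further comment.
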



\section{Proving Lottery Ticket Hypothesis for Spiking Neural Networks}\label{sec:prove_LTH}
SLTH states that a large randomly initialized neural network has a sub-network that is equivalent to any well-trained network. In this work, the large initialized network, sub-network, well-trained network are named \emph{large network} $G$, \emph{equivalent (pruned) network} $\tilde{G}$, and \emph{target network} $\hat{G}$, respectively. We use the same method to differentiate the weights and other notations in these networks. 

We generally follow the theoretical proof approach in \cite{malach2020proving}. \textbf{Note, the only unique additional assumption we made (considering the characteristics of SNN) is that the membrane potentials follow a class of distribution defined in \Cref{def:vNFD}, which is easy to satisfy.} Specifically, the whole proof in this work is divided into two parts: approximation of linear transformation (\Cref{fig:abstract}\textbf{c}) and approximation of spatio-temporal nonlinear transformation (\Cref{fig:abstract}\textbf{d}), each of which contains three steps. The first part is roughly similar to the proof of LTH in the existing ANN \cite{malach2020proving}. The second part requires our spatio-temporal probabilistic modeling approach in \Cref{sec:modeling}, i.e., \Cref{theorem:theorem_in_chapter_3}.

\textbf{Approximation of linear transformation.} The existing methods for proving LTH in ANNs all first approach a single weight between two neurons by adding a virtual layer (introducing redundant weights), i.e., Step 1. Different virtual layer modeling methods \cite{malach2020proving,orseau2020logarithmic,pensia2020optimal} will affect the width of the network in the final conclusion. All previous proofs introduce two neurons in the virtual layer for approximation. In this work, we exploit only one spiking neuron in the virtual layer for the approximation, given the nature of the binary firing of spiking neurons. We then approximate the weights between one neuron and one layer of neurons (Step 2), and the weights between two layers of neurons (Step 3). Step 2 and Step 3 are the same as previous works.

\emph{Step 1: Approximate single weight by a spiking neuron.} As shown in \Cref{fig:weight_approximate}, a connection weight in SNNs can be approximated via an additional spiking neuron (i.e, a new virtual layer with only one neuron) with two weights, one connect in and one connect out. We set the two weights of the virtual neuron as $v$ and $\tilde{w}$, and the target connection weight is $\hat{w}$. Consequently, the equivalent function for the target weight can be written as $g(s)=\tilde{w}\tilde{\sigma}(v s)$, where the temporal input of the virtual neuron does not need to be considered. Once $v \geq u_{th}$, the virtual neuron will fire no matter how the temporal input is if spatial input $s=1$, while not fire if $s=0$. Thus, the output of the virtual neuron is independent of the temporal input, which is $V_{reset}$ or the decayed membrane potential (neither of which is likely to affect the firing of the virtual neuron). We have: 1) target weight connection: $\hat{g}(s)=\hat{w}s$; 2) equivalent structure: $\tilde{g}(s)=\tilde{w}\tilde{\sigma}(vs)$.
If the weight $v$ satisfy $v \geq u_{th}$, the error between the target weight connection and the equivalent structure is
\begin{equation}
\left\|\tilde{g}(s)-\hat{w}s\right\| = \left\|\tilde{w}s-\hat{w}s\right\|\leq \left\|\tilde{w}-\hat{w}\right\|.
\end{equation}
Once the number of initialized weights in the large network $G$ is large enough, it can choose the weights $\tilde{w}$ and $v$ to make the error approximate to $0$ by probability convergence. Thus, only one spiking neuron is needed for the virtual layer. Formally, we have \Cref{lemma:SWA_appendix}. 

\begin{wrapfigure}[15]{o}{0.5\textwidth} 
\vspace{-6mm}
\begin{center}
\centerline{\includegraphics[width=0.5\columnwidth]{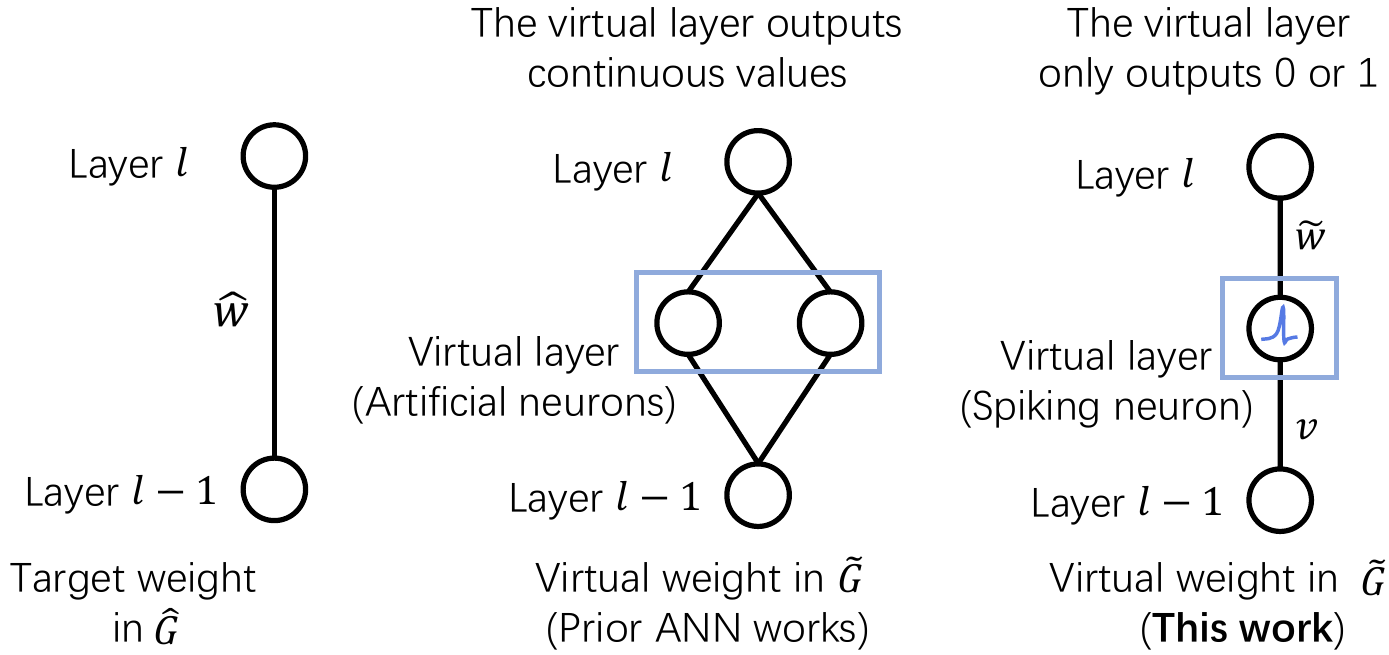}}
\caption{The target weight $\hat{w}$ is simulated in the pruned network $\tilde{G}$ by only one intermediate spiking neuron (this work), due to the spiking neuron only output 0 or 1. In contrast, previous ANN works required two intermediate artificial neurons to simulate the target weights.}
\label{fig:weight_approximate}
\end{center}
\vskip -0.3in
\end{wrapfigure}

\emph{Step 2: Layer weights approximation.} Based on \Cref{lemma:SWA_appendix}, we can extend the conclusion to the case where there is one neuron in layer $l$ and several neurons in layer $l-1$. The lemma is detailed in \Cref{lemma:LWA_appendix}.

\emph{Step 3: Layer to layer approximation.} Finally, we get an approximation between the two layers of spiking neurons. See \Cref{lemma:LLA_appendix} for proof.

\begin{lemma}
\label{lemma:LLA}
\textbf{Layer to Layer Approximation.} Fix the weight matrix $\boldsymbol{\hat{W}} \in [-\frac{1}{\sqrt{N}}, \frac{1}{\sqrt{N}}]^{N \times N}$ which is the connection in target network between a layer of spiking inputs and the next layer of neurons. The equivalent structure is $k$ spiking neurons with $k \times N$ weights $\boldsymbol{V}\in\mathbf{R}^{k\times N}$ connect the input and $\boldsymbol{\tilde{W}}\in\mathbf{R}^{N \times k}$ connect out. All the weights $\tilde{w}_{ij}$ and $v_{ij}$ random initialized with uniform distribution $\mathrm{U}[-1, 1]$ and i.i.d. $\boldsymbol{B}\in\{0, 1\}^{N \times k}$ is the mask for matrix $\boldsymbol{\tilde{W}}$, $\sum_{i, j}\left\|B_{ij}\right\|_{0} \leq N^2, \sum_{j}\left\|B_{ij}\right\|_{0} \leq N$. Then, let the function of equivalent structure be $\tilde{g}(\boldsymbol{s})=(\boldsymbol{\tilde{W}}\odot\boldsymbol{B})\tilde{\sigma}(\boldsymbol{V}\boldsymbol{s})$, where input spiking $\boldsymbol{s}$ is a vector that $\boldsymbol{s}\in\{0, 1\}^{N}$. Then, $\forall \skx{0<}\delta \skx{\leq 1}, \exists \epsilon \skx{>0}$ when $k \geq N\lceil\frac{N}{C_{th}\epsilon}\log{\frac{N}{\delta}}\rceil$ and $C_{th}=\frac{1 - u_{th}}{2}$, there exists a mask matrix $\boldsymbol{B}$ that $\left\|[\tilde{g}(\boldsymbol{s})]_{i}-[\hat{W}\boldsymbol{s}]_{i}\right\| \leq\epsilon$ w.p at least $1-\delta$.
\end{lemma}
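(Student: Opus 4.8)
The plan is to assemble the two-layer approximation one weight at a time from the single-weight gadget of \Cref{lemma:SWA_appendix} (Step 1) and its one-output-neuron extension \Cref{lemma:LWA_appendix} (Step 2), and then to control the random draw of $\boldsymbol{V}$ and $\boldsymbol{\tilde W}$ by a union bound. Write $k=N\cdot m$ with $m=\lceil\frac{N}{C_{th}\epsilon}\log\frac{N}{\delta}\rceil$, and split the $k$ hidden spiking neurons into $N$ groups of $m$ neurons, the $j$-th group wired — as in the construction behind Steps 1--2, which fixes the sparsity pattern of $\boldsymbol{V}$ — so that every neuron of that group reads only the single input coordinate $s_j$. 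The one structural fact I need from Step 1 is the \emph{clean-gate} property: a hidden neuron of group $j$ whose input weight $v$ satisfies $v\ge u_{th}$ computes $\tilde\sigma(v\,s_j)=s_j$ \emph{exactly}, for every $\boldsymbol{s}\in\{0,1\}^N$ and independently of that neuron's temporal state (since $u_{th}\in(0,1)$ forces $\operatorname{Hea}(-u_{th})=0$). Hence, if output neuron $i$ keeps the connection to that hidden neuron with weight $\tilde w_{i,\cdot}$, its contribution to $[\tilde g(\boldsymbol{s})]_i$ is exactly $\tilde w_{i,\cdot}\,s_j$, which matches the target term $\hat w_{ij}s_j$ up to the error $|\tilde w_{i,\cdot}-\hat w_{ij}|$.

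The probabilistic step is then a routine union-bound estimate. Fix the output coordinate $i$ of the statement and set the per-weight tolerance $\eta=\epsilon/N$. Call a neuron $n$ of group $j$ \emph{good} if $v_n\ge u_{th}$ and $|\tilde w_{i,n}-\hat w_{ij}|\le\eta$. Since $v_n\sim\mathrm{U}[-1,1]$ and, independently, $\tilde w_{i,n}\sim\mathrm{U}[-1,1]$, and since $\hat w_{ij}\in[-\tfrac{1}{\sqrt{N}},\tfrac{1}{\sqrt{N}}]$ lies strictly inside $[-1,1]$, the probability that a given neuron is good is at least $\tfrac{1-u_{th}}{2}\cdot\eta=C_{th}\tfrac{\epsilon}{N}$. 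The $m$ neurons of group $j$ are i.i.d., so the probability that group $j$ has no good neuron is at most $(1-C_{th}\epsilon/N)^m\le e^{-C_{th}\epsilon m/N}\le\delta/N$ by the choice of $m$; a union bound over $j=1,\dots,N$ shows that, with probability at least $1-\delta$, every group $j$ supplies a good neuron $n_j$. On that event, define row $i$ of the mask by $B_{i,n_j}=1$ for these $N$ neurons and $B_{i,n}=0$ otherwise (rows $i'\neq i$ filled arbitrarily subject to sparsity); this obeys $\sum_{n}\|B_{in}\|_0\le N$ and $\sum_{i,n}\|B_{in}\|_0\le N^2$.

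Aggregating the coordinate errors finishes it: on the good event,
\[
\bigl\|[\tilde g(\boldsymbol{s})]_i-[\hat W\boldsymbol{s}]_i\bigr\|
=\Bigl\|\sum_{j=1}^{N}\bigl(\tilde w_{i,n_j}-\hat w_{ij}\bigr)\,s_j\Bigr\|
\le\sum_{j=1}^{N}\bigl|\tilde w_{i,n_j}-\hat w_{ij}\bigr|\,|s_j|
\le N\cdot\frac{\epsilon}{N}=\epsilon,
\]
and, because the gate identity $\tilde\sigma(v_{n_j}s_j)=s_j$ holds for \emph{all} $\boldsymbol{s}\in\{0,1\}^N$, the bound is uniform in the input, which is exactly the assertion.

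For this lemma alone I do not expect a serious obstacle: it is the spiking counterpart of the linear-approximation step of \cite{malach2020proving}, and the lone SNN-specific ingredient is the clean-gate property above, which is actually \emph{simpler} than the ReLU case — the input $s_j\in\{0,1\}$ needs no sign-splitting and the threshold makes the output exactly $0$ or $1$, which is why one intermediate spiking neuron suffices where ANNs use two. The only points needing care are bookkeeping-level: the tolerance must be $\eta=\epsilon/N$ so that $N$ per-weight errors sum to $\epsilon$, and one must use the independence of $\{v_n\ge u_{th}\}$ and $\{|\tilde w_{i,n}-\hat w_{ij}|\le\eta\}$ — together these produce the stated width $k\ge N\lceil\frac{N}{C_{th}\epsilon}\log\frac{N}{\delta}\rceil$. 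The genuinely new difficulty of the paper, namely controlling the LIF reset-and-leak temporal dynamics of the \emph{target} network's neurons, does not enter here; it is deferred to the nonlinear-approximation part built on \Cref{theorem:theorem_in_chapter_3}.
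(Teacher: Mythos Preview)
Your proposal is correct and follows essentially the same block-and-union-bound construction as the paper: partition the $k$ hidden neurons into $N$ groups of size $k'=m$, declare a neuron ``good'' when $v\ge u_{th}$ and $|\tilde w-\hat w|\le\epsilon/N$, bound the failure probability of each group by $\exp(-k'C_{th}\epsilon/N)$, and union-bound. The one difference is scope: you fix the output coordinate $i$ and union only over the $N$ input groups, which reproduces the main-text bound $k\ge N\lceil\frac{N}{C_{th}\epsilon}\log\frac{N}{\delta}\rceil$; the paper's appendix proof instead unions over all $N^{2}$ pairs $(j,i)$ simultaneously, obtaining $\log\frac{N^{2}}{\delta}$ inside the ceiling (this is the form actually stated in \Cref{lemma:LLA_appendix}) and hence the stronger conclusion that the $\epsilon$-bound holds for every output coordinate at once --- which is what the downstream lemmas need.
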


\textbf{Approximation of spatio-temporal nonlinear transformation.} Since the activation functions in ANNs satisfy the Lipschitz condition, the input error can control the output error of the activation function. In contrast, the output error is not governed by the input error when the membrane potential of a spiking neuron is at a step position. To break this limitation, in Step 4, we combine the probabilistic modeling method in \Cref{theorem:theorem_in_chapter_3} to analyze the probability of consistent output (only 0 or 1) of spiking layers. Then, in Step 5, we generalize the conclusions in Step 4 to the entire network regardless of the temporal input. Finally, we consider the dynamics of SNNs in the temporal dimension in Step 6. Specifically, we can denote a SNN as follow:
\begin{equation}
G^{t}(\boldsymbol{S}) = G^{t, L}\circ G^{t, L-1}\circ \dots \circ G^{t, 2} \circ G^{t, 1}(\boldsymbol{S}), 
\end{equation}
where $\boldsymbol{S} \in \{0, 1\}^{N \times T}$ is the spatial input of the entire network at all timesteps. $G^{t, l}(\cdot)$ represents the function of network $G$ at $t$-th timestep and $l$-th layer, when considering the specific temporal input, it can be written as:
\begin{equation}
G^{t, l}(\boldsymbol{S}) = \sigma^{t, l}_{\boldsymbol{W}^{l}\boldsymbol{S}^{1:t-1, l}}(\boldsymbol{W}^{l}\boldsymbol{S}^{t, l-1}),
\end{equation}
where spatial input at $t$-th timestep is $\boldsymbol{S}^{t, l-1} \in \{0, 1\}^N$, $\sigma$ denotes the activation function of spiking layer, its subscript $\boldsymbol{S}^{1:t-1, l}$ indicates that the membrane potential at $t$-th timestep is related to the spatial inputs at all previous timesteps. We are not concerned with these details in probabilistic modeling, thus they are omitted in the remaining chapters where they do not cause confusion.

\emph{Step 4: Layer spiking activation approximation.} Compared to the Step 3, we here include spiking activations to evaluate the probability that two different spiking layers (same input, different weights) have the same output. See \Cref{lemma:LAA_appendix} for proof
\begin{lemma}
\label{lemma:LAA}
\textbf{Layer Spiking Activation Approximation.} Fix the weight matrix $\boldsymbol{\hat{W}} \in [-\frac{1}{\sqrt{N}}, \frac{1}{\sqrt{N}}]^{N \times N}$ which is the connection in target network between a layer of spiking inputs and the next layer of neurons. The equivalent structure is $k$ spiking neurons with $k \times N$ weights $\boldsymbol{V}\in\mathbf{R}^{k\times N}$ connect the input and $\boldsymbol{\tilde{W}}\in\mathbf{R}^{N \times k}$ connect out. All the weights $\tilde{w}_{ij}$ and $v_{ij}$ random initialized with uniform distribution $\mathrm{U}[-1, 1]$ and i.i.d. $\boldsymbol{B}\in\{0, 1\}^{N \times k}$ is the mask for matrix $\boldsymbol{V}$, $\sum_{i, j}\left\|B_{ij}\right\|_{0} \leq N^2, \sum_{j}\left\|B_{ij}\right\|_{0} \leq N$. Then, let the function of equivalent structure be $\tilde{g}(\boldsymbol{s})=\tilde{\sigma}((\boldsymbol{\tilde{W}}\odot \boldsymbol{B})\tilde{\sigma}(\boldsymbol{V}\boldsymbol{s}))$, where input spiking $\boldsymbol{s}$ is a vector that $\boldsymbol{s}\in\{0, 1\}^{N}$. $C$ is the constant depending on the supremum probability density of the dataset of the network. Then, $\forall \delta, \exists \epsilon$ when $k \geq N^2\lceil\frac{N}{C_{th}\epsilon}\log{\frac{N^2}{\delta-NC\epsilon}}\rceil$, there exists a mask matrix $\boldsymbol{B}$ that $\left\|\tilde{g}(\boldsymbol{s})-\hat{\sigma}(\boldsymbol{\hat{W}}\boldsymbol{s})\right\| = 0$ w.p at least $1-\delta$.
\end{lemma}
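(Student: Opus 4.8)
The plan is to reduce Lemma~\ref{lemma:LAA} to the already-established Lemma~\ref{lemma:LLA} (Step 3) plus the spatio-temporal probabilistic bound of Theorem~\ref{theorem:theorem_in_chapter_3}, by splitting the target budget $\delta$ into a ``linear approximation'' part and a ``spiking crisis'' part. Concretely, write the equivalent structure as $\tilde g(\boldsymbol s)=\tilde\sigma\bigl((\boldsymbol{\tilde W}\odot\boldsymbol B)\tilde\sigma(\boldsymbol V\boldsymbol s)\bigr)$ and the target as $\hat\sigma(\boldsymbol{\hat W}\boldsymbol s)$. The inner map $(\boldsymbol{\tilde W}\odot\boldsymbol B)\tilde\sigma(\boldsymbol V\boldsymbol s)$ is exactly the object that Lemma~\ref{lemma:LLA} approximates: for a pre-chosen accuracy $\epsilon>0$ and failure budget $\delta_1$ it produces a mask $\boldsymbol B$ with $k\ge N\lceil\frac{N}{C_{th}\epsilon}\log\frac{N}{\delta_1}\rceil$ so that $\bigl\|[(\boldsymbol{\tilde W}\odot\boldsymbol B)\tilde\sigma(\boldsymbol V\boldsymbol s)]_i-[\boldsymbol{\hat W}\boldsymbol s]_i\bigr\|\le\epsilon$ for every coordinate $i$, with probability at least $1-\delta_1$. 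So with that mask fixed, the two spiking layers $\tilde\sigma$ and $\hat\sigma$ receive, respectively, the inputs $(\boldsymbol{\tilde W}\odot\boldsymbol B)\tilde\sigma(\boldsymbol V\boldsymbol s)$ and $\boldsymbol{\hat W}\boldsymbol s$, which differ coordinatewise by at most $\epsilon$.

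Next I would invoke Theorem~\ref{theorem:theorem_in_chapter_3} on these two width-$N$ spiking layers. Since the membrane potentials are assumed $u_{th}$-Neighborhood-Finite, and the per-coordinate input error is bounded by $\epsilon$, the theorem gives that the probability the spiking outputs differ at the current timestep is at most proportional to $N\frac{\epsilon}{1-\beta}$; absorbing $\frac1{1-\beta}$ and the supremum of the density into the single constant $C$ advertised in the statement, this is $\le NC\epsilon$ (this is where the hypothesis that $\hat\sigma$ agrees on the first $T-1$ timesteps is used, inductively feeding the ``no prior disagreement'' precondition of the theorem). A union bound over the two bad events — Lemma~\ref{lemma:LLA} failing, or the crisis event firing — shows that $\tilde g(\boldsymbol s)=\hat\sigma(\boldsymbol{\hat W}\boldsymbol s)$ exactly (both are $\{0,1\}$ vectors, so ``close'' forces ``equal'') with probability at least $1-\delta_1-NC\epsilon$. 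Finally, given the target $\delta$, choose $\epsilon$ small enough that $NC\epsilon<\delta$ and set $\delta_1=\delta-NC\epsilon$; substituting this $\delta_1$ into the width bound of Lemma~\ref{lemma:LLA} yields exactly $k\ge N^2\lceil\frac{N}{C_{th}\epsilon}\log\frac{N^2}{\delta-NC\epsilon}\rceil$ after accounting for the extra factor of $N$ from stacking $N$ output neurons (each needing its own mask budget), which matches the claimed bound.

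The main obstacle I anticipate is the interface between the ``per-coordinate error $\epsilon$'' guarantee of Lemma~\ref{lemma:LLA} and the precise hypotheses of Theorem~\ref{theorem:theorem_in_chapter_3}: the theorem wants the \emph{spatial input feature} errors bounded, whereas Lemma~\ref{lemma:LLA} bounds the error of the pre-activation linear image — one must check that $(\boldsymbol{\tilde W}\odot\boldsymbol B)\tilde\sigma(\boldsymbol V\boldsymbol s)$ really plays the role of $\tilde x^{t}$ and that the temporal inputs can be taken equal (or the mismatch controlled), since the virtual neurons in the inner layer also carry their own hidden state. A secondary subtlety is bookkeeping the probabilities across both the $N$ output coordinates and, implicitly, across timesteps: one needs the crisis bound to hold uniformly so that the single-timestep statement $\|\tilde g(\boldsymbol s)-\hat\sigma(\boldsymbol{\hat W}\boldsymbol s)\|=0$ is what gets proved here, deferring the genuine multi-timestep union bound to Step~6. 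Handling the degenerate cases $\delta-NC\epsilon\le 0$ (vacuous) and ensuring $\epsilon$ can always be chosen to simultaneously satisfy $NC\epsilon<\delta$ and Lemma~\ref{lemma:upper bound}'s neighborhood constraint is routine but must be stated.
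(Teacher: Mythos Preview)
Your proposal is correct and follows essentially the same approach as the paper: split the failure budget into a linear-approximation part (handled by Lemma~\ref{lemma:LLA}, or equivalently the events $\mathbb{E}_{j,i,k',\epsilon/N}$) and a ``crisis'' part $NC\epsilon$ (handled by Theorem~\ref{theorem:theorem_in_chapter_3}, with $C=\tfrac{2p_{\sup}}{1-\beta}$), then union-bound and solve $N^{2}\exp(-k'C_{th}\epsilon/N)+NC\epsilon\le\delta$ for $k$. One minor remark: your explanation for the extra factor of $N$ (``stacking $N$ output neurons, each needing its own mask budget'') is slightly off, since Lemma~\ref{lemma:LLA} already treats all $N$ outputs simultaneously; the paper simply carries the $k=Nk'$ relation through without introducing a new per-output argument, and you should do the same rather than double-count.
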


\emph{Step 5: Network approximation ($T=1$).} The lemma is generalized to the whole SNN in \Cref{lemma:ALA_appendix}.

\emph{Step 6: Network approximation ($T > 1$).} If the output of two SNNs in the first $T-1$ timestep is consistent, and we assume that the error of spatial input features at all timesteps has an upperbound, there is a certain probability that the output of the two SNNs in timestep $T$ is also the same. See detail proof in \Cref{theorem:LTH_appendix} and detail theorem as follow \Cref{theorem:LTH}:
\begin{theorem}
\label{theorem:LTH}
\textbf{All Steps Approximation.} Fix the weight matrix $\boldsymbol{\hat{W}}^{l} \in [-\frac{1}{\sqrt{N}}, \frac{1}{\sqrt{N}}]^{N \times N}$ which is the connection in target network between a layer of spiking inputs and the next layer of neurons. The equivalent structure is $k$ spiking neurons with $k \times N$ weights $\boldsymbol{V}^{l}\in\mathbf{R}^{k\times N}$ connect the input and $\boldsymbol{\tilde{W}}^{l}\in\mathbf{R}^{N \times k}$ connect out. All the weights $\boldsymbol{\tilde{w}}^{l}_{ij}$ and $v^{l}_{ij}$ random initialized with uniform distribution $\mathrm{U}[-1, 1]$ and i.i.d. $\boldsymbol{B}^{l}\in\{0, 1\}^{k \times N}$ is the mask for matrix $\boldsymbol{V}^{l}$, $\sum_{i, j}\left\|B^{l}_{ij}\right\|_{0} \leq N^2, \sum_{j}\left\|B^{l}_{ij}\right\|_{0} \leq N$. Then, let the function of equivalent network at timestep $t$ be $\tilde{G}^{t}(\boldsymbol{S})=\tilde{G}^{t, L}\circ\tilde{G}^{t, L-1}\circ\cdots\circ\tilde{G}^{t, 1}(\boldsymbol{S}) $ and $\tilde{G}^{t, l} = \tilde{\sigma}^{t}((\boldsymbol{\tilde{W}}^{l}\odot\boldsymbol{B}^{l})\tilde{\sigma}^{t}(\boldsymbol{V}^{l}\boldsymbol{S}^{t}))$, where input spiking $\boldsymbol{S}$ is a tensor that $\boldsymbol{S}\in\{0, 1\}^{N\times T}$. And the target network at timestep $t$ is $\hat{G}^{t}(\boldsymbol{S})=\hat{G}^{t, L}\circ\hat{G}^{t, L-1}\circ\cdots\circ\hat{G^{t, 1}}(\boldsymbol{S})$, where $\hat{G}^{t, l}(\boldsymbol{S})=\hat{\sigma}^{t}(\hat{W}^{l}\boldsymbol{S}^{t})$. $l=1, 2, \cdots, L, t=1, 2, \cdots, T$. $C$ is the constant depending on the supremum probability density of the dataset of the network. Then, $\forall \skx{0<}\delta \skx{\leq 1}, \exists \epsilon \skx{>0}$ when$ 
k \geq N^2\lceil\frac{N}{C_{th}\epsilon}\log{\frac{N^2 L}{\delta-NCLT\epsilon}}\rceil, 
$ there exists a mask matrix $\boldsymbol{B}$ that  $\left\|\tilde{G}(\boldsymbol{S})-\hat{G}(\boldsymbol{S})\right\| = 0,$ w.p at least $1-\delta$.
\end{theorem}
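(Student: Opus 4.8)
The plan is to assemble \Cref{theorem:LTH} from the single-layer building block \Cref{lemma:LAA} by two nested inductions: an outer induction over timesteps $t=1,\dots,T$ and, inside each timestep, an induction over layers $l=1,\dots,L$. The governing principle is that spike vectors are binary, so once a layer's output is matched it is matched \emph{exactly}; consequently errors in magnitude never compound across depth or time (unlike the Lipschitz-based ANN proofs), and only the \emph{failure probabilities} accumulate, to be controlled by union bounds. This is what lets a single width bound $k\ge N^2\lceil\frac{N}{C_{th}\epsilon}\log\frac{N^2L}{\delta-NCLT\epsilon}\rceil$ suffice simultaneously for all layers and timesteps.

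First I would handle the $T=1$ case (Step 5). Fix the input $\boldsymbol{S}\in\{0,1\}^{N\times T}$. Applying \Cref{lemma:LAA} to layer $l$ with target matrix $\boldsymbol{\hat W}^{l}$ and per-coordinate budget $\epsilon$, there is a mask $\boldsymbol{B}^{l}$ and a choice among the random weights $\boldsymbol{V}^{l},\boldsymbol{\tilde W}^{l}$ for which $\tilde G^{1,l}(\boldsymbol{s})=\hat G^{1,l}(\boldsymbol{s})$ exactly, with controlled failure probability; the inner virtual layer $\tilde\sigma(\boldsymbol{V}^{l}\cdot)$ carries no temporal complication because, exactly as in Step 1, choosing the relevant entries of $\boldsymbol{V}^{l}$ to be at least $u_{th}$ makes each virtual neuron fire iff its current spatial input is $1$, independent of its own temporal state. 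Conditioning on the event that layers $1,\dots,l-1$ were matched, the spike input fed to layer $l$ is literally the target's input, so the conditional use of \Cref{lemma:LAA} is legitimate; a union bound over $l=1,\dots,L$ then gives $\tilde G^{1}(\boldsymbol{S})=\hat G^{1}(\boldsymbol{S})$ with high probability, which is the origin of the factor $L$ inside the logarithm of the width bound.

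Next I would run the temporal induction (Step 6). Suppose that, with high probability, $\tilde G^{t',l}(\boldsymbol{S})=\hat G^{t',l}(\boldsymbol{S})$ for every layer $l$ and every $t'<t$. Then the temporal input $\boldsymbol{h}^{t-1}$ feeding the outer spiking layer at position $(t,l)$ is \emph{identical} in the two networks: by \Cref{eq:temporal_dynamic} it is a function only of the (matched) previous spikes and previous membrane potentials, which are themselves matched by the induction hypothesis together with the exact layerwise matching at earlier timesteps. Hence the only residual discrepancy at $(t,l)$ is the $\le\epsilon$ per-coordinate error between $(\boldsymbol{\tilde W}^{l}\odot\boldsymbol{B}^{l})\tilde\sigma(\boldsymbol{V}^{l}\cdot)$ and $\boldsymbol{\hat W}^{l}$ coming from \Cref{lemma:LLA}. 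Invoking \Cref{theorem:theorem_in_chapter_3} with this error and the $u_{th}$-Neighborhood-Finite Distribution assumption, the probability that any of the $N$ neurons in layer $l$ at timestep $t$ lands in its crisis neighborhood and flips is $O\!\left(N\epsilon/(1-\beta)\right)$; summing this crisis budget over all $L$ layers and $T$ timesteps is exactly what forces the denominator $\delta-NCLT\epsilon$ inside the logarithm. Combining the layerwise linear-approximation budget with this total crisis budget and choosing $\epsilon$ small enough that $\delta-NCLT\epsilon>0$ yields overall success probability at least $1-\delta$ under the stated lower bound on $k$.

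The main obstacle is precisely this temporal step. Because the masks $\boldsymbol{B}^{l}$ and weights $\boldsymbol{V}^{l},\boldsymbol{\tilde W}^{l}$ carry no timestep index, the matching events at different timesteps are not independent and cannot be handled by a product bound; the argument must show that a \emph{single} choice of mask — good for the linear approximation at every layer — keeps the conditional crisis probabilities at each $(t,l)$ small, and then union-bound those crisis events over $t$ and $l$, which is the source of the linear-in-$T$ degradation. A secondary delicate point is verifying that ``all earlier outputs agree'' really does force $\boldsymbol{h}^{t-1}$ to agree \emph{exactly} even though the two networks have different weights and the equivalent network has an extra virtual layer; this rests entirely on the zero-error layerwise matching from \Cref{lemma:LAA} and on the temporal-input-independence of the virtual neurons established in Step 1, not on any Lipschitz-type control.
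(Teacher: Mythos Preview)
Your nested-induction organization is exactly the paper's argument—a single union bound over the weight-approximation failure events $\mathbb{E}_{l,j,i,k',\epsilon/N}$ (which carry no $t$ index since the mask is shared across time, giving the $\log(N^{2}L/\cdot)$ factor) together with the firing-crisis events $\mathbb{E}_{fire,t,l}$ (which do carry $t$, giving the $NCLT\epsilon$ subtracted from $\delta$)—just presented sequentially rather than all at once. One caveat: your assertion that $\boldsymbol{h}^{t-1}$ is \emph{identical} in the two networks under the induction hypothesis is not quite right, since the membrane potentials can still differ by up to $\epsilon/(1-\beta)$ even when every earlier spike agrees; but because you then invoke \Cref{theorem:theorem_in_chapter_3} (which is precisely the result that absorbs this temporal drift) rather than \Cref{lemma:singleEFR}, the argument goes through unchanged.
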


\section{Searching Winning Tickets from SNNs without Weight Training}\label{sec:exp_verify}

By applying the top-$k\%$ sub-network searching (\texttt{edge-popup}) algorithm\cite{ramanujan2020s}, we empirically verified SLTH as the complement of the theoretical proof in \Cref{sec:prove_LTH}. Spiking neurons do not satisfy the Lipschitz condition, but the ANN technique can still be employed since the surrogate gradient of the SNN during BP training \cite{Wu_STBP_2018}. We first simply introduce the search algorithm, then show the results. 

The sub-network search algorithm first proposed by \cite{ramanujan2020s} provides scores for every weight as a criterion for sub-network topology choosing. In each step of forward propagation, the top-$k\%$ score weights are chosen while others are masked by $0$. It means the network used for classification is actually a sub-network of the original random initialized network. During the BP training process, the scores are randomly initialized as weights at the beginning, and then the gradient of the score is updated while the gradient of weights is not recorded (weights are not changed). The updating rule of scores can be formally expressed in mathematical notations as follow:
\begin{equation}
    s_{uv} \leftarrow s_{uv} + \alpha \frac{\partial\mathcal{L}}{\partial\mathcal{I}_{v}}S_{u}w_{uv},
\end{equation}
Where $s_{uv}$ and $w_{uv}$ denote the score and corresponding weight that connect from spiking neuron $u$ to $v$, respectively, $S_{u}\in\{0, 1\}$ indicates the output of neuron $u$, $\mathcal{L}$ and $\mathcal{I}_{v}$ are loss function value of the network and the input value of the neuron $v$, and $\alpha$ is the learning rate.

\begin{wrapfigure}[22]{r}{0.4\textwidth}
    \vspace{-6mm}
	\centering  
	\subfigbottomskip=-1pt 
	\subfigcapskip=-5pt 
	\subfigure[CIFAR10]{
		\includegraphics[width=\linewidth]{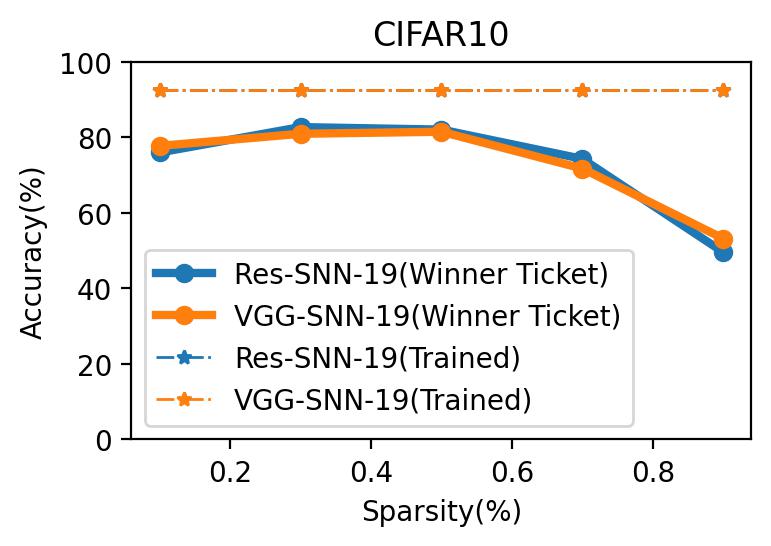}}
  
        \subfigure[CIFAR100]{
		\includegraphics[width=\linewidth]{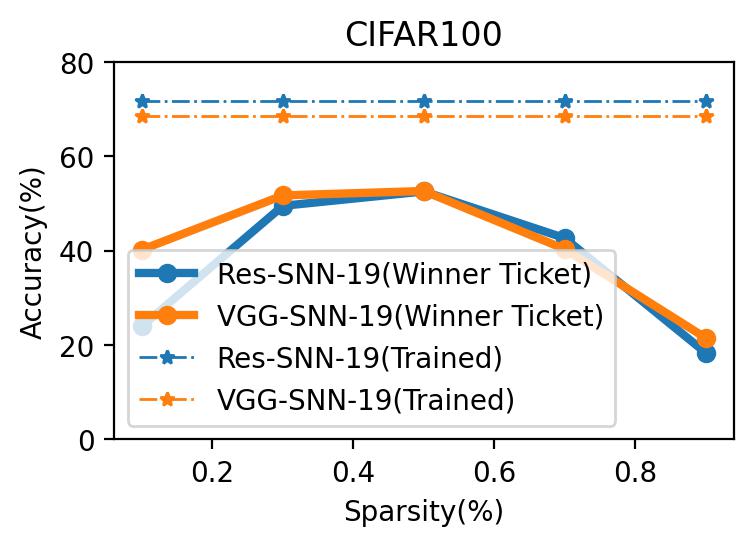}}
    \vspace{-4mm}
	\caption{The horizontal and vertical coordinates represent the sparsity and accuracy. The solid line is the untrained sub-network. The dashed line is the trained large network (sparsity=$0\%$).}
 \label{fig:subnetwork_scearch}
\end{wrapfigure}

We apply the \texttt{edge-popup} in SNN-VGG-16 and Res-SNN-19. We set maskable weights for all layers (the first layer and the last layer are included). The hyper-parameter sparsity $k\%$ is set every 0.2 from 0.1 to 0.9. The datasets used to verify are CIFAR10/100. The network structures and other experiment details are shown in \Cref{app_sec:exp_details}. As shown in \Cref{fig:subnetwork_scearch}, the existence of good sub-networks is empirically verified, which is highly conformed with our theoretical analysis. The additional timesteps and discontinuous activation complicate the sub-network search problem in SNNs, but good sub-networks still exist without any weight training and can be found by specific algorithms.



\section{Pruning Criterion Based on Probabilistic Modeling}\label{sec:algorithm}

\textbf{Discussion.} For two spiking neurons, layers, or networks that have the same input but different weights, the proposed probabilistic modeling method can give the probability that the outputs (that is, the spiking firing) of both are the same. We convert the perturbation brought by pruning under the same input into the error of spatial input features, and then analyze its impact on spiking firing. Then, we prove that the lottery ticket hypothesis also holds in SNNs.

\begin{wrapfigure}[16]{r}{0.4\textwidth}
\vspace{4mm}
\begin{center}
\centerline{\includegraphics[width=\linewidth]{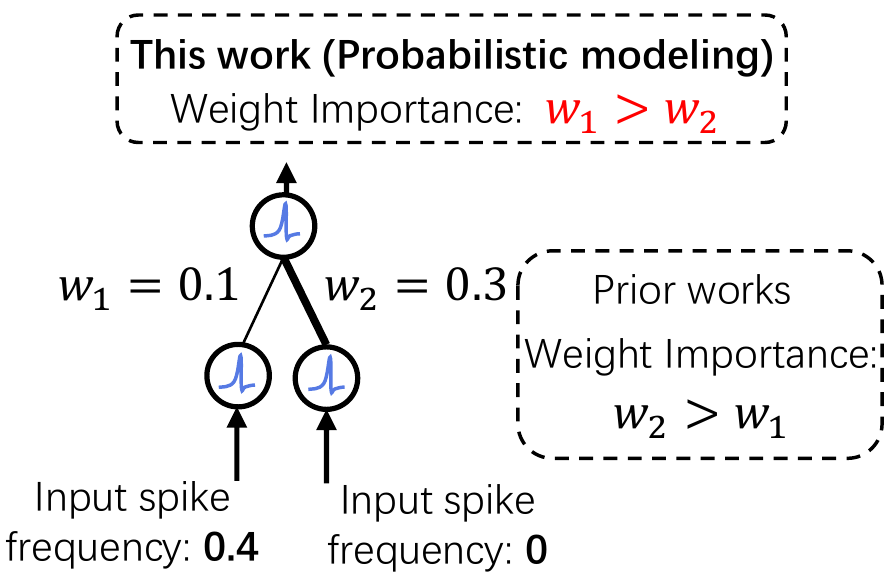}}
\vspace{-4mm}
\caption{The importance of weight needs to consider both the magnitude of the weight and the firing of neurons.}
\label{fig:weight_importance}
\end{center}
\end{wrapfigure}

The potential of probabilistic modeling goes far beyond that. Probabilistic modeling prompts us to rethink whether the existing pruning methods in SNNs are reasonable, since most methods directly continue the idea and methods of ANN pruning. Firstly, from the view of linear transformation, \emph{\textbf{how to metrics the importance of weights in SNNs?}} The relationship between the inner state of the artificial neuron before activation and the input is usually not considered in ANN pruning. But we have to take this into account in binary-activated SNN pruning because the membrane potential is related to both weights and inputs. For instance, when the input spike frequency is 0, no matter how large the weight is, it will not affect the result of the linear transformation (\Cref{fig:weight_importance}). Secondly, from the view of nonlinear transformation, \emph{\textbf{how does pruning affect the output of SNNs?}} As shown in \Cref{fig:abstract}\textbf{b}, when the membrane potential is in different intervals, the probability that the output of the spiking neuron is changed is also different. Thus, we study this important question theoretically using probabilistic modeling.

Moreover, we can assume that the weights of the network remain unchanged, and add disturbance to the input, then exploit probabilistic modeling to analyze the impact of input perturbations on the output, i.e., analyze the robustness \cite{kundu2021hire} of the SNNs in a new way. Similarly, probabilistic modeling can also analyze other compression methods in SNNs, such as quantization \cite{deng2021comprehensive}, tensor decomposition \cite{KCP_SNN}, etc. The perturbations brought about by these compression methods can be converted into errors in the membrane potential, which in turn affects the firing of spiking neurons.

\textbf{Pruning criterion design.} We design a new criterion to prune weights according to their probabilities of affecting the firing of spiking neurons. Based on \Cref{theorem:theorem_in_chapter_3}, for each weight, its influence on the output of the spiking neuron has a probability upperbound $P$, which can be estimated as:
\begin{equation}
P\approx E{(\left|u^{\prime} - u \right|)}\mathcal{N}(0|\mu-u_{th}, {var}),
\label{eq:pro}
\end{equation}
where $E{(\left|u^{\prime} - u \right|)}$ is the expectation of the error in the membrane potential brought about by pruning. In this work, it is written as:
\begin{equation}
E{(\left|u^{\prime} - u \right|)} = \frac{E_{act}[\left|w\right|]\left|\gamma\right|}{\sqrt{\sigma_{\mathcal{B}}^{2}+\epsilon}},
\label{eq:error_experctation}
\end{equation}

where $E_{act}[w]$ is the expectation that a weight is activated (the pre-synaptic spiking neuron outputs a spike). $\gamma$ and $\sigma_{B}$ are a pair of hyper-parameters in Batch Normalization \cite{ioffe2015batch}, which can also affect the linear transformation of weights, thus we incorporate them into \Cref{eq:error_experctation}. The detailed derivation of \Cref{eq:error_experctation} can be found in \Cref{app_sec:pruning_method}. Note, in keeping with the notations in classic BN, there is some notation mixed here, but only for \Cref{eq:pro} and \Cref{eq:error_experctation}. Next, following \cite{zheng2021going,guo2022recdis}, here we suppose the membrane potential follows Gaussian distribution $\mathcal{N}(\mu, {var})$. Consequently, $\mathcal{N}(0|\mu-u_{th}, {var})$ represents the probability density when the membrane potential is at the threshold $u_{th}$. Finally, for each weight, we have a probability $P$, and we prune according to the size of $P$.


\begin{wrapfigure}[24]{r}{0.4\textwidth}
	\centering  
    \vspace{-6mm}
	\subfigbottomskip=2pt 
	\subfigcapskip=-5pt 
	\subfigure[CIFAR10]{
		\includegraphics[width=\linewidth]{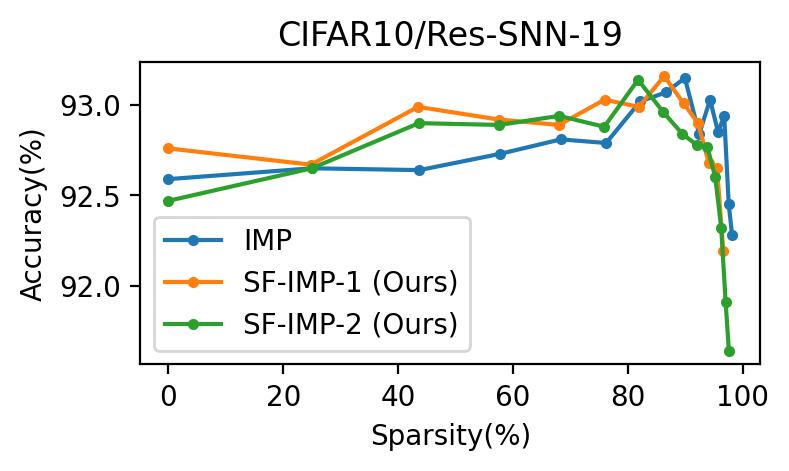}}
  
        \subfigure[CIFAR100]{
		\includegraphics[width=\linewidth]{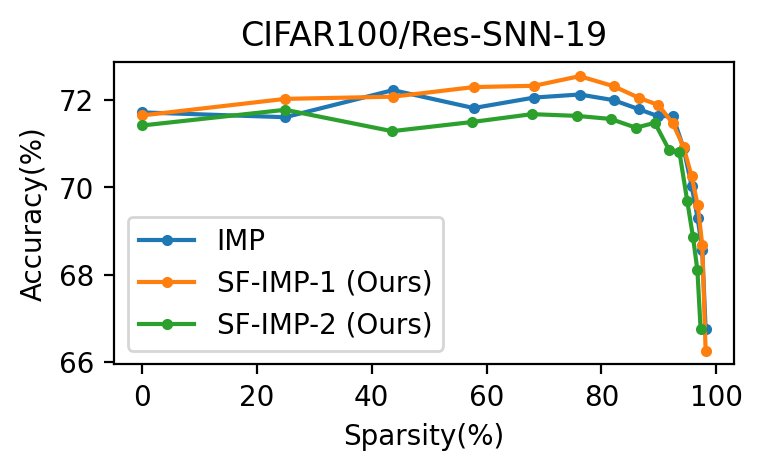}}
	\caption{Observations from Iterative Magnitude Pruning (IMP) with the proposed pruning criterion $P$ in \cref{eq:pro}.}
 \label{fig:exper_result}
\end{wrapfigure}

\textbf{Experiments.} Since our subject is to theoretically prove LTH in SNNs, we test the proposed new criteria using methods related to LTH. Note, \Cref{eq:pro} is a general pruning criterion and can also be incorporated in traditional non-LTH SNN pruning methods. Specifically, in this work we employ the LTH-based Iterative Magnitude Pruning (IMP) method for prune \cite{frankle2018the}, whose criterion is to prune weights with small absolute values (algorithm details are given in \Cref{app_sec:algorithm}). Then, we change the criterion to pruning according to the size of $P$ in \Cref{eq:pro}.


We re-implement the pipeline network (Res-SNN-19 proposed in \cite{zheng2021going}) and IMP pruning method on the CIFAR-10/100 \cite{krizhevsky2009learning} datasets. We use the official code provided by authors in \cite{LTH_SNN_2022} for the whole pruning process. Then, we perform rigorous ablation experiments without bells and whistles, i.e., all experiment settings are same, the only change is to regulate the pruning criterion to \Cref{eq:pro}. This is a IMP pruning based on Spiking Firing, thus we name it SF-IMP-1 (\Cref{app_sec:algorithm}). Moreover, the encoding layer (the first convolutional layer) in SNNs has a greater impact on task performance, and we can also allow the weights pruning in the encoding layer to be reactivated \cite{guo2016dynamic}, called SF-IMP-2 (\Cref{app_sec:algorithm}).

The experimental results are shown in \Cref{fig:exper_result}. First, these two approaches perform marginally differently on CIFAR-10/100, with SF-IMP-2 outperforming on CIFAR-10, and SF-IMP-1 performing better on CIFAR-100. Especially, compared with the vanilla IMP algorithm on CIFAR-100, SF-IMP-1 performs very well at all pruning sparsities. But on CIFAR-10, when the sparsity is high, the vanilla IMP looks better. One of the possible reasons is that the design of \Cref{eq:pro} still has room for improvement. These preliminary results demonstrate the effectiveness of the pruning criteria designed based on probabilistic modeling, and also lay the foundation for the subsequent design of more advanced pruning algorithms.

\section{Conclusions}
In this work, we aim to theoretically prove that the Lottery Ticket Hypothesis (LTH) also holds in SNNs, where the LTH theory has had a wide influence in traditional ANNs and is often used for pruning. In particular, spiking neurons employ binary discrete activation functions and have complex spatio-temporal dynamics, which are different from traditional ANNs. To address these challenges, we propose a new probabilistic modeling method for SNNs, modeling the impact of pruning on spiking firing, and then prove both theoretically and experimentally that LTH\footnote{Specifically and strictly, strong LTH. We get a sub-network that performs well without any training.} holds in SNNs. We design a new pruning criterion from the probabilistic modeling view and test it in an LTH-based pruning method with promising results. Both the probabilistic modeling and the pruning criterion are general. The former can also be used to analyze the robustness and other network compression methods of SNNs, and the latter can also be exploited in non-LTH pruning methods. In conclusion, we have for the first time theoretically established the link between membrane potential perturbations and spiking firing through the proposed novel probabilistic modeling approach, and we believe that this work will provide new inspiration for the field of SNNs.

\bibliographystyle{unsrt} 
\bibliography{neurips_2023}

\newpage
\appendix
\setcounter{equation}{0}
\renewcommand\theequation{S.\arabic{equation}} 
\onecolumn
\section{Notations}\label{app_sec:notation}


\begin{table*}[!ht]
\caption{Important Notations. }
\label{sample-table}
\vskip 0.15in
\begin{center}
\begin{small}
\begin{tabular}{cc}
\toprule 
Symbol  & Definition \\
\midrule
$t$    & Timestep index \\
$l$    & Layer index \\
$i$    & Neuron index \\
$u_{i}^{t, l}$ & Membrane potential of spiking neuron \\
$h_{i}^{t, l}$ & Temporal input of spiking neuron  \\
$s_{i}^{t, l}$ & Spatial input of spiking neuron   \\
$x_{i}^{t, l}$ & Spatial input feature of spiking neuron  \\
$u_{th}$       & Firing threshold\\
$\operatorname{Hea}(\cdot)$ & Heaviside step function \\
$V_{reset}$    & Reset membrane potential after firing a spike\\
$\beta$       & Decay factor\\
$w^{l}_{ij}$       & Weight connect from two spiking neuron\\
$u_{th}$       & Firing threshold\\

$x$ & Spatial input feature of a spiking neuron without indexes\\
$x^{\prime}$ & Spatial input feature after adding perturbation\\
$u$ & Membrane potential of a spiking neuron without indexes\\
$u^{\prime}$ & Membrane potential after adding perturbation\\

$\sigma_{x^{1:t-1}}^{t}$ & Membrane potential depends on all previous spatial input features\\
$\sigma$ & Shorthand for activation function $\sigma_{x^{1:t-1}}^{t}$\\

$N$ & Width of network\\
$T$ & Total timestep\\

$G$ & Large network\\
$G^{l}$ & $l$-th layer of large network\\
$\hat{G}$ & Target network\\
$\tilde{G}$ & Equivalent network\\

$\boldsymbol{s}^{t}$ & Spatial input vector at timestep $t$\\
$\boldsymbol{s}$ & Spatial input vector without timestep index\\
$\boldsymbol{S}^{t}$ & Spatial input tensor at timestep $t$\\
$\boldsymbol{S}^{1:t}$ & All spatial input tensors before timestep $t$\\
$\boldsymbol{S}$ & Shorthand for $\boldsymbol{S}^{1:t}$\\
$\mathcal{S}$ & Dataset \\

$\boldsymbol{v}$ & Virtual layer weight\\
$\boldsymbol{V}$ & Weight matrix of Virtual layer\\
$\boldsymbol{b}$ & Mask of weight vector\\
$\boldsymbol{B}$ & Mask of weight matrix\\
$\boldsymbol{W}^{l}$ & Weight matrix of $l$-th layer\\
$C_{th}$ & A constant related to the hyper-parameter $u_{th}$\\
$C$ & A constant related to the dataset $\mathcal{S}$ and threshold $u_{th}$\\
$\mathbb{E}$ & Event\\
$k$ & Network width required for LTH establishment\\

\bottomrule
\end{tabular}
\end{small}
\end{center}
\label{table:notation}
\vskip -0.1in
\end{table*}

  



\section{Proofs of Probabilistic Modeling}\label{app_sec:pro_modeling}
\begin{lemma}
\label{lemma:upper bound_appendix}
For the probability density function $p(\cdot)$, if it is v-Neighborhood-Finite Distribution, for any $\delta \skx{>0}$, there exists $\epsilon \skx{>0} $ that$\int_{v-\epsilon}^{v+\epsilon}p(x)\mathrm{dx} \leq \delta$.
\end{lemma}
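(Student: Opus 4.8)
\textbf{Proof proposal for \Cref{lemma:upper bound_appendix} (equivalently \Cref{lemma:upper bound}).}

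The plan is to make direct use of the defining property of a $v$-Neighborhood-Finite Distribution together with a simple monotonicity argument. By \Cref{def:vNFD}, there is some $\epsilon_0 > 0$ such that $M := \sup_{x \in [v-\epsilon_0, v+\epsilon_0]} p(x) < +\infty$. First I would observe that the crucial quantity $\int_{v-\epsilon}^{v+\epsilon} p(x)\,\mathrm{d}x$ is monotone nondecreasing in $\epsilon$ and, more importantly, for any $0 < \epsilon \le \epsilon_0$ it is bounded above by the trivial estimate $\int_{v-\epsilon}^{v+\epsilon} p(x)\,\mathrm{d}x \le M \cdot 2\epsilon$, since on the shrunken interval $[v-\epsilon, v+\epsilon] \subseteq [v-\epsilon_0, v+\epsilon_0]$ the density never exceeds $M$.

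Next, given an arbitrary $\delta > 0$, I would simply choose $\epsilon := \min\!\left(\epsilon_0, \frac{\delta}{2M}\right)$ (and if $M = 0$, any $\epsilon \le \epsilon_0$ works, since the integral is then $0$). With this choice, $\epsilon > 0$ and $\epsilon \le \epsilon_0$, so the bound $\int_{v-\epsilon}^{v+\epsilon} p(x)\,\mathrm{d}x \le 2 M \epsilon \le 2 M \cdot \frac{\delta}{2M} = \delta$ holds, which is exactly the claim. This is essentially a one-line argument once the definition is unpacked.

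I do not anticipate a genuine obstacle here; the only point requiring a moment of care is the degenerate case where the supremum $M$ of the density on the neighborhood is zero (or where one worries about $M$ being attained versus merely a supremum) — in either case the uniform bound $p(x) \le M$ on the interval still gives $\int_{v-\epsilon}^{v+\epsilon} p(x)\,\mathrm{d}x \le 2M\epsilon$, so the argument goes through unchanged. The lemma is really just the statement that a locally bounded density has arbitrarily small mass on sufficiently small neighborhoods of a point, and the proof should be presented at exactly that level of brevity.
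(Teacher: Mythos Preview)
Your proposal is correct and follows essentially the same approach as the paper: bound the integral by $2\epsilon$ times the local supremum of the density, then choose $\epsilon$ small enough. Your version is in fact slightly more careful than the paper's, since you explicitly take $\epsilon \le \epsilon_0$ to ensure the sup bound applies and you handle the degenerate case $M=0$; the paper's proof also contains a typo (writing $\epsilon \le \frac{p_{sup}}{2\delta}$ where $\epsilon \le \frac{\delta}{2p_{sup}}$ is meant), which you have corrected.
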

\begin{proof}
Since distribution $p$ is v-Neighborhood-Finite Distribution, for any $\delta \skx{>0}$,\skx{there} exists \skx{a constant} $\epsilon \skx{>0}$so that $x \in (v-\epsilon, v+\epsilon)$
\begin{equation}
p_{sup} = \mathop{sup}\limits_{x}p(x) \geq p(x),
\end{equation}
thus:
\begin{equation}
\int_{v-\epsilon}^{v+\epsilon}p(x)\mathrm{dx} \leq 2\epsilon p_{sup},
\end{equation}
when $\epsilon \leq \frac{p_{sup}}{2 \delta}$, the lemma holds.

\end{proof}

\begin{lemma}
\label{lemma:singleEFR_appendix}
Suppose the spiking neurons are $u_{th}$-Neighborhood-Finite Distribution and the inputs of two corresponding spiking neuron got an error upperbound $\epsilon$, and they got the same inner state $h^{T-1}$, the probability upperbound of different outputs is proportional to $\epsilon$.

For norm, we use $\left\| \cdot \right\|_{0}$ and $\left\| \cdot \right\|_{2}$. The $\left\| \cdot \right\|_{0}$ is used to count the number of non-zero weights while the $\left\| \cdot \right\|_{2}$ is used to measure the distance of outputs.

Formally: 

For two spiking neurons $\hat{\sigma}^{T}$ and $\tilde{\sigma}^{T}$, when $\tilde{h}^{T-1}=\hat{h}^{T-1}$ and $\hat{u}^{T}=\hat{h}^{T-1} + \hat{x}^{T}$ is a random variable follows the $u_{th}$-Neighborhood-Finite Distribution, if $\|\tilde{x}^{T} - \hat{x}^{T}\|\leq \epsilon$, then:
$P\left[\hat{\sigma}^{T}(\hat{x}^{T})\neq\tilde{\sigma}^{T}(\tilde{x}^{T}) \right]\propto \epsilon $
\end{lemma}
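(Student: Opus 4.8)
The plan is to reduce the event $\{\hat{\sigma}^{T}(\hat{x}^{T}) \neq \tilde{\sigma}^{T}(\tilde{x}^{T})\}$ to the event that the target membrane potential $\hat{u}^{T}$ falls into the crisis neighborhood $[u_{th}-\epsilon, u_{th}+\epsilon]$, and then invoke \Cref{lemma:upper bound} (equivalently \Cref{lemma:upper bound_appendix}) to bound the probability of that event by a quantity linear in $\epsilon$. First I would observe that, since $\tilde{h}^{T-1} = \hat{h}^{T-1}$ by hypothesis, the two membrane potentials differ only through the spatial input features: $\tilde{u}^{T} - \hat{u}^{T} = \tilde{x}^{T} - \hat{x}^{T}$, so $\|\tilde{u}^{T} - \hat{u}^{T}\| \leq \epsilon$ and hence $\tilde{u}^{T} \in [\hat{u}^{T}-\epsilon, \hat{u}^{T}+\epsilon]$.

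Next I would argue the key implication: if $\hat{u}^{T} \notin [u_{th}-\epsilon, u_{th}+\epsilon]$, then $\hat{u}^{T}$ and $\tilde{u}^{T}$ lie on the same side of the threshold $u_{th}$, so $\operatorname{Hea}(\hat{u}^{T}-u_{th}) = \operatorname{Hea}(\tilde{u}^{T}-u_{th})$, i.e. the two neurons produce the same spike. Contrapositively, the outputs can differ only if $\hat{u}^{T} \in [u_{th}-\epsilon, u_{th}+\epsilon]$. Therefore
\begin{equation}
P\left[\hat{\sigma}^{T}(\hat{x}^{T}) \neq \tilde{\sigma}^{T}(\tilde{x}^{T})\right] \leq P\left[\hat{u}^{T} \in [u_{th}-\epsilon, u_{th}+\epsilon]\right] = \int_{u_{th}-\epsilon}^{u_{th}+\epsilon} p(u)\,\mathrm{d}u,
\end{equation}
which is exactly the crisis probability of \Cref{eq:upper bound}. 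Since $\hat{u}^{T}$ follows a $u_{th}$-Neighborhood-Finite Distribution, there is $p_{sup} = \sup_{u \in [u_{th}-\epsilon, u_{th}+\epsilon]} p(u) < +\infty$ (shrinking $\epsilon$ if necessary so the finiteness neighborhood from \Cref{def:vNFD} applies), and the integral is bounded above by $2\epsilon\, p_{sup}$. Hence the probability upperbound is proportional to $\epsilon$, which is the claim.

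The only delicate point — and the step I expect to require the most care — is the boundary behavior of the Heaviside function and the precise meaning of ``proportional to $\epsilon$.'' One must be careful that the crisis neighborhood is a closed interval so that the edge cases $\hat{u}^{T} = u_{th} \pm \epsilon$ are handled correctly (they are harmlessly absorbed into the bound), and that the constant of proportionality $p_{sup}$ is allowed to depend on the chosen $\epsilon$-neighborhood guaranteed by \Cref{def:vNFD} but can be taken uniform once $\epsilon$ is small enough. Everything else is a short, routine argument: the reduction to the threshold-crossing event is essentially the observation already made informally around \Cref{eq:upper bound}, and the analytic bound on the integral is precisely \Cref{lemma:upper bound_appendix}.
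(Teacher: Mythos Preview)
Your proposal is correct and follows essentially the same approach as the paper: reduce the disagreement event to $\hat{u}^{T}$ lying in the crisis neighborhood $[u_{th}-\epsilon,u_{th}+\epsilon]$ via the equality $\tilde{u}^{T}-\hat{u}^{T}=\tilde{x}^{T}-\hat{x}^{T}$, then bound the resulting integral by $2p_{sup}\epsilon$ using the $u_{th}$-Neighborhood-Finite assumption. The paper phrases the disagreement as $\operatorname{sign}(\hat{u}^{T}-u_{th})\operatorname{sign}(\tilde{u}^{T}-u_{th})=-1$ rather than via $\operatorname{Hea}$, but this is only a cosmetic difference.
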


\begin{proof}
Since distribution of $\hat{u}^{T}=\hat{h}^{T-1} + \hat{x}^{T}$, denotes as $p$, is $u_{th}$-Neighborhood-Finite Distribution, for any $\delta$, exists $\epsilon$ that $\hat{u} \in (u_{th}-\epsilon, u_{th}+\epsilon)$
\begin{equation}
p_{sup} = \mathop{sup}\limits_{\hat{u}^{T}}p(\hat{u}^{T}) \geq p(\hat{u}^{T}),
\end{equation}
since $\tilde{u}^{T} = \tilde{h}^{T-1}+\tilde{x}^{T} = \hat{u}^{T} + \tilde{x}^{T} - \hat{x}^{T}$, $\|\hat{u}^{T}-\tilde{u}^{T}\| \leq \epsilon,$
then:
\begin{equation}
\begin{aligned}
& \quad P\left[\hat{\sigma}^{T}(\hat{x}^{T})\neq\tilde{\sigma}^{T}(\tilde{x}^{T}) \right] \\
&= P\left[sign(\hat{u}^{T}-u_{th})sign(\tilde{u}^{T}-u_{th})=-1\right] \\
&\leq \int_{u_{th}-\epsilon}^{u_{th}+\epsilon} p(\hat{u}^{T})\mathrm{d}u \leq 2p_{sup}\epsilon,
\end{aligned}
\end{equation}
thus the lemma holds.
\end{proof}

Note, for timestep $T$, the membrane potential will follow $p(u^{T}; S^{1:T}, W)$, where $S^{1:T}\in\{0, 1\}^{N\times T}$ is the input from timesteps 1 to $T$, $W \in \mathbf{R}^{N}$ is the weight vector for the linear combination that shared at each timestep. When we do not emphasize the specific timestep and input data, we use $p(u)$ for brevity. 

\begin{lemma}
\label{lemma:multiTEFR_appendix}
Suppose the spiking neurons are $u_{th}$-Neighborhood-Finite Distribution at timestep $T$ and the inputs of two corresponding spiking neurons got an error upperbound $\epsilon$ at any timestep, and they got the same inner state $h^{0}$, if there is no different output at the first $T-1$ timesteps, then the probability upperbound is proportional to $\frac{\epsilon}{1 - \beta}$.

Formally: 

For two spiking neurons $\hat{\sigma}^{T}$ and $\tilde{\sigma}^{T}$, when $\tilde{h}^{0}=\hat{h}^{0}$ and $\hat{u}^{T}=\hat{h}^{T-1} + \hat{x}^{T}$ is a random variable follows the $u_{th}$-Neighborhood-Finite Distribution, if $\|\tilde{x}^{t} - \hat{x}^{t}\|\leq \epsilon$ and $\hat{\sigma}^{t}(\hat{x}^{t})=\hat{\sigma}^{t}(\tilde{x}^{t})$ for $t = 1, 2, \cdots, T-1$, then:
$P\left[\hat{\sigma}^{T}(\hat{x}^{T})\neq\tilde{\sigma}^{T}(\tilde{x}^{T}) \right]\propto \frac{\epsilon}{1 - \beta}$. 

\end{lemma}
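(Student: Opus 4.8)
\textbf{Proof proposal for \Cref{lemma:multiTEFR_appendix}.}

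The plan is to reduce the multi-timestep statement to the single-timestep result \Cref{lemma:singleEFR_appendix} by carefully tracking how the error in the membrane potential propagates through the temporal dynamics \Cref{eq:mem}--\Cref{eq:temporal_dynamic}. First I would set up the recursion for the discrepancy $e^{t} := \tilde{u}^{t} - \hat{u}^{t}$. Since by hypothesis the two neurons share the same output on timesteps $1,\dots,T-1$, on each such timestep the reset/leak branch selected in \Cref{eq:temporal_dynamic} is the \emph{same} for both neurons: either both fire (and both reset to $V_{reset}$, so the temporal contribution to $e^{t+1}$ is killed) or both stay silent (and both carry forward $\beta u^{t}$, so the temporal part of the discrepancy is multiplied by $\beta$). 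In either case the recursion reads $e^{t+1} = \beta\,[\![\text{no spike at }t]\!]\,e^{t} + (\tilde{x}^{t+1}-\hat{x}^{t+1})$, and since $\beta<1$ and $\|\tilde{x}^{t}-\hat{x}^{t}\|\le\epsilon$ for all $t$, unrolling gives the geometric bound $\|e^{T}\| = \|\tilde{u}^{T}-\hat{u}^{T}\| \le \epsilon\sum_{j\ge 0}\beta^{j} = \frac{\epsilon}{1-\beta}$.

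With that bound in hand, the rest mirrors the proof of \Cref{lemma:singleEFR_appendix}: the outputs $\hat{\sigma}^{T}$ and $\tilde{\sigma}^{T}$ differ only if $\hat{u}^{T}$ and $\tilde{u}^{T}$ lie on opposite sides of $u_{th}$, which forces $\hat{u}^{T}$ into the crisis neighborhood $[u_{th}-\frac{\epsilon}{1-\beta},\,u_{th}+\frac{\epsilon}{1-\beta}]$. Invoking the $u_{th}$-Neighborhood-Finite Distribution assumption and \Cref{lemma:upper bound_appendix} (with radius $\frac{\epsilon}{1-\beta}$ in place of $\epsilon$), we get
\begin{equation}
P\left[\hat{\sigma}^{T}(\hat{x}^{T})\neq\tilde{\sigma}^{T}(\tilde{x}^{T})\right] \le \int_{u_{th}-\frac{\epsilon}{1-\beta}}^{u_{th}+\frac{\epsilon}{1-\beta}} p(\hat{u}^{T})\,\mathrm{d}u \le 2 p_{sup}\frac{\epsilon}{1-\beta},
\end{equation}
which is exactly the claimed proportionality $P\propto\frac{\epsilon}{1-\beta}$.

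The main obstacle, and the step that deserves the most care, is justifying the ``same branch'' claim that makes the error recursion clean. One has to be explicit that conditioning on the event $\{\hat{\sigma}^{t}(\hat{x}^{t})=\hat{\sigma}^{t}(\tilde{x}^{t})\text{ for }t\le T-1\}$ means not merely that the two neurons agree, but that at every earlier timestep the \emph{reset indicator} $s^{t}$ coincides, so the multiplier in front of $e^{t}$ is a common $\beta\cdot s^{t}$-free factor rather than two different factors — otherwise $e^{T}$ would pick up additive jumps of size $\|V_{reset}-\beta u^{t}\|$ that are not controlled by $\epsilon$. A secondary subtlety is that the distributional assumption is stated for $\hat{u}^{T}$; one should note that the probability is taken over the randomness generating $\hat{u}^{T}$ (the weights and input spike pattern), and that the conditioning on past agreement is compatible with still having $\hat{u}^{T}$ follow — or be dominated in the relevant neighborhood by — a $u_{th}$-Neighborhood-Finite density, so that $p_{sup}<\infty$ can be used. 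Everything else is the same $2\epsilon p_{sup}$-style estimate already carried out in \Cref{lemma:singleEFR_appendix}.
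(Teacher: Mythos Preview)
Your proposal is correct and follows essentially the same argument as the paper: track the membrane-potential discrepancy through the temporal recursion, use the hypothesis of matching outputs on timesteps $1,\dots,T-1$ to ensure both neurons take the same reset/leak branch (so the error is either killed or multiplied by $\beta$), unroll to the geometric bound $\|\hat u^{T}-\tilde u^{T}\|\le \epsilon/(1-\beta)$, and then apply the crisis-neighborhood estimate exactly as in \Cref{lemma:singleEFR_appendix}. Your discussion of the ``same branch'' conditioning and the distributional subtlety is in fact more explicit than the paper's own treatment.
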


\begin{proof}
For each timestep $i$, we got:

\begin{equation}
\hat{u}^{t}=\hat{h}^{t-1} + \hat{x}^{t}
\end{equation}
\begin{equation}
\tilde{u}^{t}=\tilde{h}^{t-1} + \tilde{x}^{t},
\end{equation}

at timestep $t=1$, we have:
\begin{equation}
\| \hat{u}^{1} = \tilde{u}^{1}\| \leq \epsilon,
\end{equation}
if the same output of the former timestep $t-1$ is $1$, there will be no error for inner state thus $\hat{h}^{t-1}=\tilde{h}^{t-1}$, then $\|\hat{u}^{t}-\tilde{u}^{t}\|\leq \epsilon$, otherwise, the error will be $\|\hat{u}^{t}-\tilde{u}^{t}\|\leq \epsilon + \beta \|\hat{u}^{t-1}-\tilde{u}^{t-1}\|$.

Thus, by iterating, here is the upperbound error for membrane potential in the timestep $T$:
\begin{equation}
\|\hat{u}^{T}-\tilde{u}^{T}\|\leq \sum_{n=0}^{T-1}\beta^{n}\epsilon \leq \sum_{n=0}^{+\infty}\beta^{n}\epsilon \leq \frac{\epsilon}{1-\beta}
\end{equation}
then:
\begin{equation}
\quad P\left[\hat{\sigma}^{T}(\hat{x}^{T})\neq\tilde{\sigma}^{T}(\tilde{x}^{T}) \right] \leq 2p_{sup}\frac{\epsilon}{1-\beta},
\end{equation}
where $p_{sup} = \mathop{sup}\limits_{\hat{u}^{T}}p(\hat{u}^{T}) \geq p(\hat{u}^{T})$.

Here the lemma holds.
\end{proof}

\begin{theorem}
Suppose the spiking layers are $u_{th}$-Neighborhood-Finite Distribution at timestep $T$ and the inputs of two corresponding spiking layers with a width $N$ got an error upperbound $\epsilon$ for each element of input vectors at any timestep, and they got the same inner state vector $\boldsymbol{h^{0}}$, if there is no different output at the first $T-1$ timesteps, then the probability upperbound is proportional to ${N}\frac{\epsilon}{1 - \beta}$.

Formally: 

For two spiking layers $\hat{\sigma}^{T}$ and $\tilde{\sigma}^{T}$, when $\boldsymbol{\tilde{h}^{0}}=\boldsymbol{\hat{h}^{0}}$ and $\boldsymbol{\hat{u}^{T}}=\boldsymbol{\hat{h}^{T-1}} + \boldsymbol{\hat{x}^{T}}$ is a random variable follows the $u_{th}$-Neighborhood-Finite Distribution, if $\|\tilde{x}^{t}_{k} - \hat{x}^{t}_{k}\|\leq \epsilon$ ($k=1, 2, \cdots, N; t=1, 2, \cdots, T$) and $\hat{\sigma}^{t}(\boldsymbol{\hat{x}^{t}})=\hat{\sigma}^{t}(\boldsymbol{\tilde{x}^{t}})$ for $t = 1, 2, \cdots, T-1$, then:
$P\left[\hat{\sigma}^{T}(\boldsymbol{\hat{x}^{T}})\neq\tilde{\sigma}^{T}(\boldsymbol{\tilde{x}^{T}}) \right]\propto {N}\frac{\epsilon}{1 - \beta}$. 
\end{theorem}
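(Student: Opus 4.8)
The plan is to reduce the layer-level statement to the single-neuron statement of \Cref{lemma:multiTEFR_appendix} via a union bound over the $N$ neurons of the layer. First I would observe that the two spiking layers $\hat\sigma^{T}$ and $\tilde\sigma^{T}$ differ in their output at timestep $T$ if and only if at least one of the $N$ component neurons differs, i.e.
\begin{equation}
\left\{\hat\sigma^{T}(\boldsymbol{\hat x^{T}})\neq\tilde\sigma^{T}(\boldsymbol{\tilde x^{T}})\right\}
= \bigcup_{k=1}^{N}\left\{\hat\sigma^{T}_{k}(\hat x^{T}_{k})\neq\tilde\sigma^{T}_{k}(\tilde x^{T}_{k})\right\}.
\end{equation}
Then a union bound gives
\begin{equation}
P\left[\hat\sigma^{T}(\boldsymbol{\hat x^{T}})\neq\tilde\sigma^{T}(\boldsymbol{\tilde x^{T}})\right]
\leq \sum_{k=1}^{N} P\left[\hat\sigma^{T}_{k}(\hat x^{T}_{k})\neq\tilde\sigma^{T}_{k}(\tilde x^{T}_{k})\right].
\end{equation}

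Next I would check that the hypotheses of \Cref{lemma:multiTEFR_appendix} are met by each coordinate neuron $k$: the componentwise error bound $\|\tilde x^{t}_{k}-\hat x^{t}_{k}\|\leq\epsilon$ holds for all $t=1,\dots,T$ by assumption; the equal initial inner state $\boldsymbol{\tilde h^{0}}=\boldsymbol{\hat h^{0}}$ gives $\tilde h^{0}_{k}=\hat h^{0}_{k}$ for each $k$; the $u_{th}$-Neighborhood-Finite Distribution assumption on $\boldsymbol{\hat u^{T}}$ is taken to hold for each marginal $\hat u^{T}_{k}$; and the ``no different output in the first $T-1$ timesteps'' hypothesis at the layer level implies it for every coordinate. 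Applying \Cref{lemma:multiTEFR_appendix} to neuron $k$ yields $P[\hat\sigma^{T}_{k}\neq\tilde\sigma^{T}_{k}]\leq 2p^{(k)}_{sup}\frac{\epsilon}{1-\beta}$, where $p^{(k)}_{sup}$ is the supremum of the density of $\hat u^{T}_{k}$ on the crisis neighborhood. Setting $p_{sup}=\max_{k}p^{(k)}_{sup}$ (finite by the distribution assumption), summing over $k$ gives
\begin{equation}
P\left[\hat\sigma^{T}(\boldsymbol{\hat x^{T}})\neq\tilde\sigma^{T}(\boldsymbol{\tilde x^{T}})\right]
\leq 2Np_{sup}\frac{\epsilon}{1-\beta},
\end{equation}
which is the claimed proportionality to $N\frac{\epsilon}{1-\beta}$.

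The routine part is the union bound and the bookkeeping of which per-coordinate hypotheses follow from the vector hypotheses. The main obstacle — or at least the point that needs care — is the interaction between coordinates: in \Cref{lemma:multiTEFR_appendix} the recursive bound $\|\hat u^{t}_{k}-\tilde u^{t}_{k}\|\leq\epsilon+\beta\|\hat u^{t-1}_{k}-\tilde u^{t-1}_{k}\|$ relies on the fact that whenever the former output was $1$ the inner state is exactly reset (so no accumulated error), and whenever it was $0$ the error decays by $\beta$; one must confirm that conditioning on the event ``all previous layer outputs agree'' does not break the marginal distributional assumption used for $\hat u^{T}_{k}$, and that the geometric-series bound $\sum_{n\geq 0}\beta^{n}\epsilon=\frac{\epsilon}{1-\beta}$ applies uniformly across all $N$ coordinates. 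Since the per-neuron error propagation is independent of $N$ and the only $N$-dependence enters through the union bound, the final bound is linear in $N$, and no tighter joint analysis is needed for the stated (proportionality) conclusion.
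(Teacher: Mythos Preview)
Your proposal is correct and matches the paper's own proof essentially step for step: the paper also defines a uniform $p_{sup}$ over all coordinates, applies \Cref{lemma:multiTEFR_appendix} to each neuron to get the per-entry bound $\frac{2}{1-\beta}p_{sup}\epsilon$, and then invokes the union bound to pick up the factor of $N$. Your additional remarks on verifying the coordinatewise hypotheses and on the conditioning issue are more careful than what the paper writes, but the underlying argument is the same.
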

\begin{proof}
Here, we define $p_{sup}$ as the upperbound probability density of all entries at timestep $T$. Thus:
\begin{equation}
p_{sup}=\mathop{sup}\limits_{\hat{u}^{T}_{k}}p(\hat{u}^{T}_{k}),
\end{equation}
then, according to \Cref{lemma:multiTEFR}, for any single entry we have:
\begin{equation}
P\left[\hat{\sigma}^{T}_{k}(\hat{x}^{T}_{k})\neq\tilde{\sigma}^{T}_{k}(\tilde{x}^{T}_{k})\right] \leq \frac{2}{1-\beta}p_{sup}\epsilon
\end{equation}
then, according to the union bound inequality:
\begin{equation}
P\left[\exists k, \hat{\sigma}^{T}_{k}(\hat{x}^{T}_{k})\neq\tilde{\sigma}^{T}_{k}(\tilde{x}^{T}_{k})\right] \leq N\frac{2}{1-\beta}p_{sup}\epsilon
\end{equation}
\end{proof}

\section{Proofs of Lottery Ticket Hypothesis in SNNs}\label{app_sec:proof_LTH}
\begin{lemma}
\label{lemma:SWA_appendix}
\textbf{Single Weight Approximation.} Fix the weight scalar $\hat{w} \in [-\frac{1}{\sqrt{N}}, \frac{1}{\sqrt{N}}]$ which is the connection in target network between two neurons. The equivalent structure is $k$ spiking neurons with $k$ weights $\boldsymbol{v}=[v_{1}, v_{2}, \cdots, v_{k}]^\mathsf {T}$ connect the input and $\boldsymbol{\tilde{w}}=[\tilde{w}_{1}, \tilde{w}_{2}, \cdots, \tilde{w}_{k}]^\mathsf {T}$ connect out. All the weights $\tilde{w}_{i}$ and $v_{i}$ random initialized with uniform distribution $\mathrm{U}[-1, 1]$ and i.i.d. $\boldsymbol{b}=[b_{1}, b_{2}, \cdots, b_{k}]$ is the mask for weight vector $\boldsymbol{v}$, and $\boldsymbol{b}\in \{0, 1\}^{k}, \left\|\boldsymbol{b}\right\|_{0} \leq 1$. Then, let the function of equivalent structure be $\tilde{g}(s)=(\boldsymbol{\tilde{w}}\odot \boldsymbol{b})^\mathsf {T}\tilde{\sigma}(\boldsymbol{v}s)$, where input spiking $s$ is a scalar that $s\in\{0, 1\}$. Then, $\forall \skx{0 <}\delta \skx{\leq 1} , \exists \epsilon \skx{>0}$ when $$k \geq \frac{1}{C_{th}\epsilon}\
\log{\frac{1}{\delta}}$$ and $C_{th}=\frac{1 - u_{th}}{2}$, there exists a mask vector $b$ that $$\left\|\tilde{g}(s)-\hat{w}s\right\| \leq\epsilon$$ w.p at least $1-\delta$.
\end{lemma}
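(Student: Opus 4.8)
The plan is to reduce the statement to a standard covering/concentration argument: among the $k$ random intermediate spiking neurons, show that with high probability at least one of them can be selected (via the mask $\boldsymbol{b}$) so that its contribution $\tilde{w}_i \tilde{\sigma}(v_i s)$ matches $\hat{w}s$ up to $\epsilon$. Since $s\in\{0,1\}$, there are only two inputs to worry about; the case $s=0$ is trivial because $\tilde{\sigma}(0)$ produces no spike (the membrane potential $v_i\cdot 0 = 0 < u_{th}$, as $u_{th}\le 1$ is assumed positive below threshold for the relevant regime), so $\tilde{g}(0)=0=\hat{w}\cdot 0$ automatically. Hence the whole argument concentrates on $s=1$.

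First I would fix $s=1$. For an intermediate neuron $i$ with incoming weight $v_i$, note that if $v_i \ge u_{th}$ the neuron fires, so $\tilde{\sigma}(v_i\cdot 1)=1$ and its outgoing contribution is exactly $\tilde{w}_i$; if $v_i < u_{th}$ it does not fire and contributes $0$. So selecting neuron $i$ (setting $b_i=1$, all others $0$) gives $\tilde{g}(1)=\tilde{w}_i$ when $v_i\ge u_{th}$. The goal is therefore: with probability $\ge 1-\delta$, there exists an index $i$ with $v_i\ge u_{th}$ \emph{and} $|\tilde{w}_i - \hat{w}|\le \epsilon$. Second, I would compute the probability that a single neuron is "good." Since $v_i\sim\mathrm{U}[-1,1]$, we have $\Pr[v_i\ge u_{th}] = \tfrac{1-u_{th}}{2}$. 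Independently, $\tilde{w}_i\sim\mathrm{U}[-1,1]$ and $\hat{w}\in[-\tfrac{1}{\sqrt N},\tfrac{1}{\sqrt N}]\subset[-1,1]$, so $\Pr[|\tilde{w}_i-\hat{w}|\le\epsilon]\ge \epsilon$ (for $\epsilon$ small; the interval $[\hat w-\epsilon,\hat w+\epsilon]$ has length $2\epsilon$ and at least half of it lies in $[-1,1]$ when $\hat w$ is near the center, giving the clean bound $\ge\epsilon$). By independence of $v_i$ and $\tilde{w}_i$, a single neuron is good with probability $\ge C_{th}\epsilon$ where $C_{th}=\tfrac{1-u_{th}}{2}$.

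Third, I would take the product over the $k$ i.i.d. neurons: the probability that \emph{no} neuron is good is at most $(1-C_{th}\epsilon)^k \le e^{-C_{th}\epsilon k}$. Requiring this to be $\le \delta$ gives $k \ge \tfrac{1}{C_{th}\epsilon}\log\tfrac{1}{\delta}$, which is exactly the stated bound. On the complement event (probability $\ge 1-\delta$) pick any good index $i$ and set $\boldsymbol{b}=e_i$; then $\|\boldsymbol{b}\|_0\le 1$ as required, and $\|\tilde{g}(s)-\hat{w}s\| = |\tilde{w}_i-\hat{w}|\cdot|s| \le \epsilon$ for both $s=0$ and $s=1$.

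I do not expect a serious obstacle here — this is the SNN analogue of Malach et al.'s single-weight step, simplified because binary spiking lets us use one intermediate neuron instead of two (no need for a $\pm$ pair to represent sign, since the outgoing weight $\tilde w_i$ already ranges over $[-1,1]$). The only mildly delicate point is the constant in $\Pr[|\tilde{w}_i-\hat{w}|\le\epsilon]\ge\epsilon$: one must make sure that $\hat w\in[-\tfrac1{\sqrt N},\tfrac1{\sqrt N}]$ together with small $\epsilon$ keeps the $\epsilon$-ball around $\hat w$ with enough mass inside $[-1,1]$, and absorb any slack into $C_{th}$ (or restrict $\epsilon\le 1-\tfrac1{\sqrt N}$ so the full ball is inside). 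Also worth stating explicitly is why the temporal input of the virtual neuron is irrelevant (already argued in Step 1 of the main text): when $v_i\ge u_{th}$ and $s=1$ the neuron fires regardless of its membrane history, and when $s=0$ the net input is $0$, so the decayed/reset history cannot push it above threshold under the stated parameter regime — this justifies treating $\tilde\sigma$ as the plain Heaviside response for the purposes of this lemma.
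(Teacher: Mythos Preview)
Your proposal is correct and follows essentially the same argument as the paper's proof: identify the ``good'' event $\{v_i \ge u_{th}\}\cap\{|\tilde{w}_i-\hat{w}|\le\epsilon\}$ with probability at least $C_{th}\epsilon$, then bound the probability that all $k$ independent neurons fail by $(1-C_{th}\epsilon)^k\le e^{-kC_{th}\epsilon}\le\delta$. You are slightly more careful than the paper about the $s=0$ case and the boundary issue for $\Pr[|\tilde{w}_i-\hat{w}|\le\epsilon]$, but the structure and the resulting bound are identical.
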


\begin{proof}
For $k$ spiking neurons in equivalent structure, since $\left\|b\right\|_{0} \leq 1$, only one spiking neuron is active while others are pruned out with their weights. For the chosen active neuron, the weights $\tilde{w}_{i}$ and $v_{i}$ should satisfy the following sufficient conditions:
\begin{itemize}
    \item $\left\|\tilde{w}_{i}-\hat{w}_{i}\right\| \leq \epsilon$
    \item $v_{i} \geq v_{th}$
\end{itemize}
Since $\tilde{w}, v \sim \mathrm{U}[-1, 1]$,
\begin{equation}
\mathrm{P}\left[\left\|\tilde{w}_{i}-\hat{w}_{i}\right\| \leq \epsilon\right] = \frac{2\epsilon}{2} = \epsilon
\end{equation}
\begin{equation}
\mathrm{P}\left[ v_{i} \geq v_{th}\right] = \frac{1 - v_{th}}{2} = C_{th}
\end{equation}
$C_{th}$ is the constant that $C_{th} = \frac{1 - v_{th}}{2} $.

The probability for a single spiking neuron that satisfies the condition is:
\begin{equation}
\begin{aligned}
&\mathrm{P}\left[\left\|\tilde{w}_{i}-\hat{w}_{i}\right\| \leq \epsilon \land v_{i} \geq v_{th}\right] \\
= &\mathrm{P}\left[\left\|\tilde{w}_{i}-\hat{w}_{i}\right\| \leq \epsilon\right]\mathrm{P}\left[ v_{i} \geq v_{th}\right] \\ 
= &C_{th}\epsilon
\end{aligned}
\end{equation}
The probability of not satisfying the condition for all $k$ spiking neurons is:
\begin{equation}
\begin{aligned}
&\mathrm{P}\left[\forall i \in [k], \neg (\left\|\tilde{w}_{i}-\hat{w}_{i}\right\| \leq \epsilon \land v_{i} \geq v_{th})\right] \\
= & (1-C_{th}\epsilon)^{k} \leq \exp{( - k C_{th} \epsilon)} \leq \delta
\end{aligned}
\end{equation}
Thus, when $k \geq \frac{1}{C_{th}\epsilon}\
\log{\frac{1}{\delta}}$, the active spiking neuron satisfies the condition with probability at least $1-\delta$.
\end{proof}

\begin{lemma}
\label{lemma:LWA_appendix}
\textbf{Layer Weights Approximation.} Fix the weights vector $\boldsymbol{\hat{w}} \in [-\frac{1}{\sqrt{N}}, \frac{1}{\sqrt{N}}]^{N}$ which is the connection in target network between a layer of spiking inputs and a neuron. The equivalent structure is $k$ spiking neurons with $k \times N$ weights $\boldsymbol{V}\in\mathbf{R}^{k\times N}$ connect the input and $\boldsymbol{\tilde{w}}=[\tilde{w}_{1}, \tilde{w}_{2}, \cdots, \tilde{w}_{k}]^\mathsf{T}$ connect out. All the weights $\tilde{w}_{i}$ and $v_{ij}$ random initialized with uniform distribution $\mathrm{U}[-1, 1]$ and i.i.d. $\boldsymbol{b}\in\{0, 1\}^{k}$ is the mask for matrix $\boldsymbol{\tilde{w}}$, $\left\|\boldsymbol{b}\right\|_{0} \leq N$. Then, let the function of equivalent structure be $\tilde{g}(\boldsymbol{s})=(\boldsymbol{\tilde{w}}\odot\boldsymbol{b})^{\mathsf{T}}\tilde{\sigma}(V \boldsymbol{s})$, where input spiking $\boldsymbol{s}$ is a vector that $\boldsymbol{s}\in\{0, 1\}^{N}$. Then, $\forall \skx{0<}\delta \skx{\leq 1}, \exists \epsilon \skx{>0}$ when $$k \geq N\lceil\frac{N}{C_{th}\epsilon}\log{\frac{N}{\delta}}\rceil$$ and $C_{th}=\frac{1 - u_{th}}{2}$, there exists a mask vector $\boldsymbol{b}$ that $$\left\|\tilde{g}(\boldsymbol{s})-\boldsymbol{\hat{w}}^{\mathsf{T}}\boldsymbol{s}\right\| \leq\epsilon$$ w.p at least $1-\delta$.
\end{lemma}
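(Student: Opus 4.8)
The plan is to bootstrap the Layer Weights Approximation directly from the Single Weight Approximation lemma (\Cref{lemma:SWA_appendix}) by a union-bound argument over the $N$ input coordinates. The key observation is that the target linear map $\boldsymbol{\hat{w}}^{\mathsf{T}}\boldsymbol{s}=\sum_{j=1}^{N}\hat{w}_j s_j$ decomposes into $N$ independent single-weight approximation sub-problems, each of which we already know how to solve, and the equivalent structure allows us to allocate a disjoint block of $\lceil\frac{N}{C_{th}\epsilon}\log\frac{N}{\delta}\rceil$ spiking neurons to each of the $N$ target weights. Concretely, I would partition the $k$ virtual neurons into $N$ groups, reserve group $j$ to simulate $\hat{w}_j$, and within group $j$ use the mask $\boldsymbol{b}$ to keep exactly one active neuron whose in-weight $v$ satisfies $v\geq u_{th}$ (so the neuron fires iff $s_j=1$, independent of its temporal input, exactly as in Step 1) and whose out-weight $\tilde{w}$ is within $\epsilon/N$ of $\hat{w}_j$.

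The steps, in order: (1) State the decomposition $\hat g(\boldsymbol{s})=\sum_{j=1}^{N}\hat w_j s_j$ and note each $s_j\in\{0,1\}$; (2) Partition $[k]$ into $N$ blocks of size $m:=\lceil\frac{N}{C_{th}\epsilon}\log\frac{N}{\delta}\rceil$ and assign block $j$ to coordinate $j$ (here we need $v_{ij}$ to connect only from input $j$ within block $j$, or equivalently note that for a neuron assigned to coordinate $j$ only the $j$-th input weight matters once the others are effectively ignored — this is a modeling choice baked into the structure); (3) Apply \Cref{lemma:SWA_appendix} with target accuracy $\epsilon/N$ and failure probability $\delta/N$ to block $j$: since $m\geq\frac{1}{C_{th}(\epsilon/N)}\log\frac{N}{\delta}=\frac{N}{C_{th}\epsilon}\log\frac{N}{\delta}$, there is w.p. at least $1-\delta/N$ a choice of active neuron in block $j$ with $\|\tilde w_{(j)}-\hat w_j\|\leq\epsilon/N$ and in-weight $\geq u_{th}$; (4) Union bound over $j=1,\dots,N$: with probability at least $1-N\cdot\frac{\delta}{N}=1-\delta$ all $N$ sub-approximations succeed simultaneously; (5) Triangle inequality: on that event, $\|\tilde g(\boldsymbol{s})-\boldsymbol{\hat w}^{\mathsf{T}}\boldsymbol{s}\|=\|\sum_j(\tilde w_{(j)}-\hat w_j)s_j\|\leq\sum_j\|\tilde w_{(j)}-\hat w_j\|\,|s_j|\leq N\cdot\frac{\epsilon}{N}=\epsilon$, which gives the claim. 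The mask-cardinality constraint $\|\boldsymbol{b}\|_0\leq N$ is satisfied because exactly one neuron per block (hence $N$ total) is kept active.

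The main obstacle — and the place where the argument needs care rather than just routine calculation — is item (2)/(3): making precise how a single virtual spiking neuron in block $j$ computes a function of $s_j$ alone, when its in-weights $\boldsymbol{V}$ a priori connect to the whole input vector $\boldsymbol{s}$. The clean resolution (matching \cite{malach2020proving} and the Step 1 construction in the excerpt) is that the mask acts on the structure so that the surviving neuron's only relevant in-connection is to $s_j$, or that the other in-weights are negligibly small with high probability under $\mathrm{U}[-1,1]$ and do not push the membrane potential across threshold when combined with the dominant term $v\geq u_{th}$; I would spell out which convention is used and verify the firing decision is unaffected, leaning on the fact (already used in Step 1) that once $v\geq u_{th}$ the neuron fires exactly when $s_j=1$ regardless of temporal input. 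A secondary technical point is confirming independence across blocks so the union bound is valid, which follows from the i.i.d. initialization of all $\tilde w_{ij}, v_{ij}$. Everything else is the triangle-inequality/union-bound bookkeeping sketched above.
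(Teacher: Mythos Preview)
Your proposal is correct and follows essentially the same route as the paper: partition the $k$ virtual neurons into $N$ blocks of size $k'=\lceil\frac{N}{C_{th}\epsilon}\log\frac{N}{\delta}\rceil$, apply the single-weight approximation (\Cref{lemma:SWA_appendix}) to each block with accuracy $\epsilon/N$, union-bound the $N$ failure events, and sum the per-coordinate errors via the triangle inequality. The paper's proof encodes exactly this via the events $\mathbb{E}_{i,k',\epsilon/N}$ and the bound $P(\bigcup_i\mathbb{E}_{i,k',\epsilon/N})\leq N\exp(-k'C_{th}\epsilon/N)\leq\delta$. Your discussion of the ``obstacle'' (why the selected neuron in block $j$ fires iff $s_j=1$ despite $V$ having $N$ columns) is in fact more careful than the paper, which simply writes the success condition as $v_{(i-1)k'+a,\,i}\geq u_{th}$ without commenting on the other in-weights; so your instinct to spell out the masking/structural convention there is well placed.
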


\begin{proof}
Define the event:
\begin{equation}
\begin{aligned}
\mathbb{E}_{i, k', \epsilon}=\{\forall a \in [k'], \neg (\left\|\tilde{w}_{(i-1)k'+a}-\hat{w}_{i}\right\| \leq \epsilon \land v_{(i-1)k'+a, i} \geq u_{th})\},
\end{aligned}
\end{equation}
this event means in a block of approximation structures with $k'$ structures to approximate the connecting weight from the $i$-th input to the output, but no one structure satisfies the $\epsilon$ approximation error condition.

Thus 
\begin{equation}
P(\mathbb{E}_{i, k', \frac{\epsilon}{N}})\leq \exp(-k' C_{th} \frac{\epsilon}{N}),
\end{equation}
We totally have $N$ blocks thus $k = N k'$ and for each block, using union bound inequality, we have:
\begin{equation}
P(\bigcup_{i}\mathbb{E}_{i, k', \frac{\epsilon}{N}})\leq \sum\limits_{i}P(\mathbb{E}_{i, k', \frac{\epsilon}{N}}) \leq N\exp({-k' C_{th} \frac{\epsilon}{N}}) \leq \delta,
\end{equation}
thus:
\begin{equation}
P((\bigcup_{i}\mathbb{E}_{i, k', \frac{\epsilon}{N}})^\mathsf{C}) \geq 1 - \delta,
\end{equation}
where
\begin{equation}
k \geq N\lceil\frac{N}{C_{th}\epsilon}\log{\frac{N}{\delta}}\rceil
\end{equation}
then the lemma holds.
\end{proof}

\begin{lemma}
\label{lemma:LLA_appendix}
\textbf{Layer to Layer Approximation.} Fix the weight matrix $\boldsymbol{\hat{W}} \in [-\frac{1}{\sqrt{N}}, \frac{1}{\sqrt{N}}]^{N \times N}$ which is the connection in target network between a layer of spiking inputs and the next layer of neurons. The equivalent structure is $k$ spiking neurons with $k \times N$ weights $\boldsymbol{V}\in\mathbf{R}^{k\times N}$ connect the input and $\boldsymbol{\tilde{W}}\in\mathbf{R}^{N \times k}$ connect out. All the weights $\tilde{w}_{ij}$ and $v_{ij}$ random initialized with uniform distribution $\mathrm{U}[-1, 1]$ and i.i.d. $\boldsymbol{B}\in\{0, 1\}^{N \times k}$ is the mask for matrix $\boldsymbol{\tilde{W}}$, $\sum_{i, j}\left\|B_{ij}\right\|_{0} \leq N^2, \sum_{j}\left\|B_{ij}\right\|_{0} \leq N$. Then, let the function of equivalent structure be $\tilde{g}(\boldsymbol{s})=(\boldsymbol{\tilde{W}}\odot\boldsymbol{B})\tilde{\sigma}(\boldsymbol{V}\boldsymbol{s})$, where input spiking $\boldsymbol{s}$ is a vector that $\boldsymbol{s}\in\{0, 1\}^{N}$. Then, $\forall \skx{0<} \delta \skx{\leq 1}, \exists \epsilon \skx{>0}$ when $$k \geq N\lceil\frac{N}{C_{th}\epsilon}\log{\frac{N\skx{^{2}}}{\delta}}\rceil$$ and $C_{th}=\frac{1 - u_{th}}{2}$, there exists a mask matrix $\boldsymbol{B}$ that $$\left\|[\tilde{g}(\boldsymbol{s})]_{i}-[\hat{W}\boldsymbol{s}]_{i}\right\| \leq\epsilon$$ w.p at least $1-\delta$.
\end{lemma}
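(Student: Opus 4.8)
This Layer-to-Layer Approximation lemma is essentially the Layer Weights Approximation lemma (\Cref{lemma:LWA_appendix}) invoked once per output neuron, combined through one extra level of union bound; \Cref{lemma:SWA_appendix} and \Cref{lemma:LWA_appendix} supply everything else. Index the output neurons by $i\in[N]$, the input coordinates by $j\in[N]$, and the $k$ virtual spiking neurons by $m\in[k]$. First I would partition the virtual neurons into $N$ consecutive blocks of size $k'=k/N$, block $j$ being responsible for input coordinate $j$, exactly as in the proofs of \Cref{lemma:SWA_appendix}--\Cref{lemma:LWA_appendix}. Call a virtual neuron $m$ of block $j$ \emph{good for the pair $(i,j)$} if its incoming weight satisfies $v_{mj}\ge u_{th}$---so, by the Step-1 argument, this neuron fires exactly when $s_j=1$---and its outgoing weight satisfies $|\tilde{w}_{im}-\hat{W}_{ij}|\le \epsilon/N$. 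On the event that every pair $(i,j)$ admits such a good virtual neuron $m(i,j)$ in block $j$, I set $B_{i,m(i,j)}=1$ for all $i,j$ and $B=0$ elsewhere; then each output keeps exactly one outgoing connection per block, so $\sum_m\|B_{im}\|_0\le N$ for each $i$ and $\sum_{i,m}\|B_{im}\|_0\le N^2$, as required. For any $\boldsymbol{s}\in\{0,1\}^N$ this gives $[\tilde{g}(\boldsymbol{s})]_i=\sum_j \tilde{w}_{i,m(i,j)}\,s_j$, hence $\|[\tilde{g}(\boldsymbol{s})]_i-[\hat{W}\boldsymbol{s}]_i\|\le\sum_j|\tilde{w}_{i,m(i,j)}-\hat{W}_{ij}|\,s_j\le N\cdot(\epsilon/N)=\epsilon$.

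It remains to lower-bound the probability of that event. Since all $\tilde{w}_{im}$ and $v_{mj}$ are i.i.d.\ $\mathrm{U}[-1,1]$, a fixed virtual neuron of block $j$ is good for a fixed pair $(i,j)$ with probability $\Pr[v_{mj}\ge u_{th}]\cdot\Pr[|\tilde{w}_{im}-\hat{W}_{ij}|\le \epsilon/N]=C_{th}\cdot(\epsilon/N)$, where $C_{th}=\tfrac{1-u_{th}}{2}$ and the second factor uses that $\hat{W}_{ij}\in[-1/\sqrt{N},1/\sqrt{N}]$ lies well inside $[-1,1]$ so the target interval of half-width $\epsilon/N$ is not clipped (legitimate since $\epsilon$ is ours to choose). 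Hence the bad event $\mathbb{E}_{ij}$ that no virtual neuron of block $j$ is good for $(i,j)$ has probability at most $(1-C_{th}\epsilon/N)^{k'}\le\exp(-k'C_{th}\epsilon/N)$. There are $N^2$ such events, one per pair $(i,j)$; a union bound gives total failure probability at most $N^2\exp(-k'C_{th}\epsilon/N)$, which is $\le\delta$ once $k'\ge\tfrac{N}{C_{th}\epsilon}\log\tfrac{N^2}{\delta}$, i.e.\ $k=Nk'\ge N\lceil\tfrac{N}{C_{th}\epsilon}\log\tfrac{N^2}{\delta}\rceil$, precisely the claimed width.

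With \Cref{lemma:SWA_appendix} and \Cref{lemma:LWA_appendix} already available, the only genuinely new point---and the part I would be most careful about---is the bookkeeping showing that a \emph{single} shared pool of $k$ virtual neurons can serve all $N$ output neurons at once, rather than needing a dedicated pool per output. That is exactly why the union bound grows from $N$ events in \Cref{lemma:LWA_appendix} to $N^2$ events here, turning $\log\tfrac{N}{\delta}$ into $\log\tfrac{N^2}{\delta}$ while keeping the width at $\Theta\!\bigl(\tfrac{N^2}{\epsilon}\log\tfrac{N^2}{\delta}\bigr)$ instead of inflating it by a further factor of $N$; one should also observe that a virtual neuron that happens to be good for several outputs causes no conflict, since the outgoing weights $\tilde{w}_{im}$ are output-specific and all of the required approximations hold simultaneously by the definition of ``good.'' Note also that no appeal to the spatio-temporal probabilistic modeling of \Cref{theorem:theorem_in_chapter_3} is needed at this stage: this lemma lies in the purely ``linear transformation'' part of the argument (Step 3), and the exact reproduction of each input bit by its selected virtual neuron is inherited verbatim from Step 1.
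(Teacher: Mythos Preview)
Your proposal is correct and follows essentially the same approach as the paper: partition the $k$ virtual neurons into $N$ blocks of size $k'$ (one per input coordinate), define the bad event that no neuron in a block is good for a given $(i,j)$ pair, and union-bound over all $N^2$ pairs to obtain $k\ge N\lceil\tfrac{N}{C_{th}\epsilon}\log\tfrac{N^2}{\delta}\rceil$. Your write-up is in fact more explicit than the paper's about verifying the mask constraints, the triangle-inequality step yielding the $\epsilon$ bound, and the observation that distinct outputs can reuse the same virtual neuron without conflict.
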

\begin{proof}
Define the event:
\begin{equation}
\mathbb{E}_{j, i, k', \epsilon}=\{\forall a \in [k'], \neg (\left\|\tilde{w}_{j, (i-1)k'+a}-\hat{w}_{ji}\right\| \leq \epsilon \land v_{(i-1)k'+a, i} \geq u_{th})\},
\end{equation}
this event means in a block of approximation structures with $k'$ structures to approximate the connecting weight from the $i$-th input to the $j$-th output, but no one structure satisfies the $\epsilon$ approximation error condition.

Thus 
\begin{equation}
P(\mathbb{E}_{j, i, k', \frac{\epsilon}{N}})\leq \exp({-k' C_{th} \frac{\epsilon}{N}}),
\end{equation}
We totally have $N$ blocks thus $k = N k'$ and for each block, using union bound inequality, we have:
\begin{equation}
P(\bigcup_{j, i}\mathbb{E}_{j, i, k', \frac{\epsilon}{N}})\leq \sum\limits_{j, i}P(\mathbb{E}_{j, i, k', \frac{\epsilon}{N}}) \leq N^{2}\exp({-k' C_{th} \frac{\epsilon}{N}}) \leq \delta,
\end{equation}
thus:
\begin{equation}
P((\bigcup_{j, i}\mathbb{E}_{j, i, k', \frac{\epsilon}{N}})^\mathsf{C}) \geq 1 - \delta,
\end{equation}
where
\begin{equation}
k \geq N\lceil\frac{N}{C_{th}\epsilon}\log{\frac{N^{2}}{\delta}}\rceil
\end{equation}
then the lemma holds.
\end{proof}

\begin{lemma}
\label{lemma:LAA_appendix}
\textbf{Layer Spiking Activation Approximation.} Fix the weight matrix $\boldsymbol{\hat{W}} \in [-\frac{1}{\sqrt{N}}, \frac{1}{\sqrt{N}}]^{N \times N}$ which is the connection in target network between a layer of spiking inputs and the next layer of neurons. The equivalent structure is $k$ spiking neurons with $k \times N$ weights $\boldsymbol{V}\in\mathbf{R}^{k\times N}$ connect the input and $\boldsymbol{\tilde{W}}\in\mathbf{R}^{N \times k}$ connect out. All the weights $\tilde{w}_{ij}$ and $v_{ij}$ random initialized with uniform distribution $\mathrm{U}[-1, 1]$ and i.i.d. $\boldsymbol{B}\in\{0, 1\}^{N \times k}$ is the mask for matrix $\boldsymbol{V}$, $\sum_{i, j}\left\|B_{ij}\right\|_{0} \leq N^2, \sum_{j}\left\|B_{ij}\right\|_{0} \leq N$. Then, let the function of equivalent structure be $\tilde{g}(\boldsymbol{s})=\tilde{\sigma}((\boldsymbol{\tilde{W}}\odot \boldsymbol{B})\tilde{\sigma}(\boldsymbol{V}\boldsymbol{s}))$, where input spiking $\boldsymbol{s}$ is a vector that $\boldsymbol{s}\in\{0, 1\}^{N}$. $C$ is the constant depending on the supremum probability density of the dataset of the network. Then, $\forall \skx{0<}\delta \skx{\leq 1}, \exists \epsilon \skx{>0}$ when $$k \geq N^2\lceil\frac{N}{C_{th}\epsilon}\log{\frac{N^2}{\delta-NC\epsilon}}\rceil$$, there exists a mask matrix $\boldsymbol{B}$ that $$\left\|\tilde{g}(\boldsymbol{s})-\hat{\sigma}(\boldsymbol{\hat{W}}\boldsymbol{s})\right\| = 0$$ w.p at least $1-\delta$.
\end{lemma}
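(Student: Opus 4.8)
The plan is to combine the pure linear-transformation approximation from \Cref{lemma:LLA_appendix} (Step 3) with the probabilistic firing-consistency bound from \Cref{theorem:theorem_in_chapter_3} (specialized to a single timestep, $T=1$, so the $\frac{1}{1-\beta}$ factor collapses to $1$). The equivalent structure here is a two-hop gadget $\tilde g(\boldsymbol s)=\tilde\sigma\big((\boldsymbol{\tilde W}\odot\boldsymbol B)\tilde\sigma(\boldsymbol V\boldsymbol s)\big)$, whose \emph{inner} linear map $(\boldsymbol{\tilde W}\odot\boldsymbol B)\tilde\sigma(\boldsymbol V\boldsymbol s)$ should $\epsilon'$-approximate, coordinate-wise, the target pre-activation $\boldsymbol{\hat W}\boldsymbol s$; then applying the outer spiking nonlinearity $\tilde\sigma$ on one side and $\hat\sigma$ on the other should, with high probability, produce \emph{exactly} the same binary vector. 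Note that relative to \Cref{lemma:LLA_appendix} the mask is on $\boldsymbol V$ rather than $\boldsymbol{\tilde W}$, but by symmetry of the gadget (each virtual neuron $a$ has one incoming weight $v_{a,i}$ that must exceed $u_{th}$ and one outgoing weight $\tilde w_{j,a}$ that must be $\epsilon$-close to $\hat w_{ji}$, and it is selected/deselected by a single mask bit) the same counting argument goes through with the mask placed on either side; one just re-labels which of the two i.i.d. $\mathrm U[-1,1]$ weights is ``chosen by the mask'' and which is ``free.''

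The steps, in order: (1) Invoke \Cref{lemma:LLA_appendix} with accuracy parameter $\epsilon$ (the inner-approximation tolerance) and confidence budget $\delta_1$: when $k\ge N\lceil\frac{N}{C_{th}\epsilon}\log\frac{N^2}{\delta_1}\rceil$ there is a mask $\boldsymbol B$ with $\big|[(\boldsymbol{\tilde W}\odot\boldsymbol B)\tilde\sigma(\boldsymbol V\boldsymbol s)]_i-[\boldsymbol{\hat W}\boldsymbol s]_i\big|\le\epsilon$ for every coordinate $i$, w.p.\ at least $1-\delta_1$. Since $\tilde\sigma(\boldsymbol V\boldsymbol s)$ restricted to the active virtual neurons is the indicator of $v_{a,i}\ge u_{th}$ times $s_i$ (temporal input irrelevant, as argued in Step 1), this is literally the Step-3 statement. (2) Condition on that event. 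Now the two layers being compared — $\hat\sigma$ applied to $\boldsymbol{\hat W}\boldsymbol s$ versus $\tilde\sigma$ applied to a vector within $\ell_\infty$-distance $\epsilon$ of it — satisfy exactly the hypotheses of \Cref{theorem:theorem_in_chapter_3} at $T=1$ with width $N$: same (trivial/zero) initial temporal input, per-coordinate spatial-feature error $\le\epsilon$. Hence $P[\tilde g(\boldsymbol s)\ne\hat\sigma(\boldsymbol{\hat W}\boldsymbol s)]\le N\cdot 2p_{\sup}\epsilon =: NC\epsilon$, where $C$ absorbs $2p_{\sup}$, the finite supremum of the membrane-potential density near $u_{th}$ over the dataset. (3) Union-bound the two failure modes: total failure probability $\le \delta_1 + NC\epsilon$. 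Set $\delta_1 = \delta - NC\epsilon$ (which requires first choosing $\epsilon$ small enough that $NC\epsilon<\delta$, possible by \Cref{lemma:upper bound_appendix}/\Cref{lemma:singleEFR_appendix}); substituting $\delta_1$ into the Step-3 width bound gives exactly $k\ge N^2\lceil\frac{N}{C_{th}\epsilon}\log\frac{N^2}{\delta-NC\epsilon}\rceil$ (the extra factor $N$ versus \Cref{lemma:LLA_appendix} is because an $\epsilon/N$-scale inner accuracy is what actually feeds the union bound — I would track this carefully). On the complement event both the linear approximation holds to within $\epsilon$ \emph{and} every coordinate's Heaviside output agrees, so the output vectors are bit-for-bit identical and $\|\tilde g(\boldsymbol s)-\hat\sigma(\boldsymbol{\hat W}\boldsymbol s)\|=0$.

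The main obstacle — and the only genuinely SNN-specific point — is Step (2): justifying that an $\epsilon$-perturbation of the \emph{pre-activation} vector translates into \emph{zero} error after the discontinuous Heaviside, rather than merely a small error. This is precisely what the $u_{th}$-Neighborhood-Finite Distribution assumption buys us, via \Cref{lemma:singleEFR_appendix}: a sign flip at coordinate $k$ can only occur if $\hat u_k$ lies in the crisis neighborhood $[u_{th}-\epsilon,u_{th}+\epsilon]$, an event of probability $\le 2p_{\sup}\epsilon$, and the union bound over $N$ coordinates gives the $NC\epsilon$ term. One subtlety to handle cleanly is the quantifier order: $\epsilon$ must be chosen \emph{after} $\delta$ (small enough that $NC\epsilon<\delta$ and that the $\log$ argument is positive), and the claimed mask $\boldsymbol B$ exists for that $\epsilon$; I would state this explicitly to avoid a circularity between ``$\exists\epsilon$'' and the appearance of $\epsilon$ inside the width bound. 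A second minor check is that the reset/decay dynamics of the virtual neuron really are irrelevant at $T=1$ — trivially true since there is no previous timestep — and that the outer target neuron $\hat\sigma$ and outer equivalent neuron $\tilde\sigma$ share the same threshold $u_{th}$, so ``same pre-activation $\Rightarrow$ same output'' is exact.
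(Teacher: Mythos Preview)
Your proposal is correct and follows essentially the same route as the paper: inherit the linear-approximation failure events $\mathbb{E}_{j,i,k',\epsilon/N}$ from \Cref{lemma:LLA_appendix}, define a spiking-mismatch event $\mathbb{E}_{\mathrm{fire}}$ bounded by $NC\epsilon$ via \Cref{theorem:theorem_in_chapter_3}, union-bound the two, and solve $N^{2}\exp(-k'C_{th}\epsilon/N)+NC\epsilon\le\delta$ for $k$. The only cosmetic difference is that the paper absorbs the $\frac{1}{1-\beta}$ factor into $C$ rather than collapsing it at $T=1$; your additional remarks on mask placement and quantifier order are more careful than the paper's own presentation.
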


\begin{proof}
The definition of $\mathbb{E}_{j, i, k', \epsilon}$ follows the proof of Lemma\Cref{lemma:LLA_appendix}, thus we have:
\begin{equation}
P(\bigcup_{j, i}\mathbb{E}_{j, i, k', \frac{\epsilon}{N}})\leq \sum\limits_{j, i}P(\mathbb{E}_{j, i, k', \frac{\epsilon}{N}}) \leq N^{2}\exp({k' C_{th} \frac{\epsilon}{N}})
\end{equation}

And the event $(\bigcup_{j, i}\mathbb{E}_{j, i, k', \frac{\epsilon}{N}})^\mathsf{C}$ implies that the error of each channel smaller than $\epsilon$.

According to the proof of theorem\Cref{theorem:theorem_in_chapter_3}, the probability upperbound of different output of spiking neurons with the same temporal state at $t=0$ is $2Np_{sup}\frac{\epsilon}{1-\beta}$. We define $\mathbb{E}_{fire}$ as the event of different output of corresponding spiking layer.

Then, according to union bound, we have:

\begin{equation}
\begin{aligned}
&P((\bigcup_{j, i}\mathbb{E}_{j, i, k', \frac{\epsilon}{N}})\bigcup\mathbb{E}_{fire})\\ \leq &\sum\limits_{j, i}P(\mathbb{E}_{j, i, k', \frac{\epsilon}{N}})+P(\mathbb{E}_{fire}) \leq N^{2}\exp({-k' C_{th} \frac{\epsilon}{N}}) + Np_{sup}\frac{2\epsilon}{1-\beta} \leq \delta,
\end{aligned}
\end{equation}
 Let $C = 2p_{sup}\frac{1}{1-\beta}$, then, we have:
 \begin{equation}
k \geq N^2\lceil\frac{N}{C_{th}\epsilon}\log{\frac{N^2}{\delta-NC\epsilon}}\rceil
 \end{equation}
 Here, the event $((\bigcup_{j, i}\mathbb{E}_{j, i, k', \frac{\epsilon}{N}})\bigcup\mathbb{E}_{fire})^\mathsf{C}$ implies that the error of each entry is smaller than $\epsilon$ while the output have no difference.
 \begin{equation}
 P(((\bigcup_{j, i}\mathbb{E}_{j, i, k', \frac{\epsilon}{N}})\bigcup\mathbb{E}_{fire})^\mathsf{C}) \geq 1 - \delta,
 \end{equation}
 the lemma holds.
\end{proof}

\begin{lemma}
\label{lemma:ALA_appendix}
\textbf{All Layers Approximation.} Fix the weight matrix $\boldsymbol{\hat{W}}^{l} \in [-\frac{1}{\sqrt{N}}, \frac{1}{\sqrt{N}}]^{N \times N}$ which is the connection in target network between a layer of spiking inputs and the next layer of neurons. The equivalent structure is $k$ spiking neurons with $k \times N$ weights $\boldsymbol{V}^{l}\in\mathbf{R}^{k\times N}$ connect the input and $\boldsymbol{\tilde{W}}^{l}\in\mathbf{R}^{N \times k}$ connect out. All the weights $\tilde{w}^{l}_{ij}$ and $v^{l}_{ij}$ random initialized with uniform distribution $\mathrm{U}[-1, 1]$ and i.i.d. $\boldsymbol{B}^{l}\in\{0, 1\}^{k \times N}$ is the mask for matrix $\boldsymbol{V}$, $\sum_{i, j}\left\|B^{l}_{ij}\right\|_{0} \leq N^2, \sum_{j}\left\|B^{l}_{ij}\right\|_{0} \leq N$. Then, let the function of equivalent network be $\tilde{G}(\boldsymbol{s})=\tilde{G}^{L}\circ\tilde{G}^{L-1}\circ\cdots\circ\tilde{G}^{1}(\boldsymbol{s}) $ and $\tilde{G}^{l} = \tilde{\sigma}((\boldsymbol{\tilde{W}}^{l}\odot\boldsymbol{B}^{l})\tilde{\sigma}(\boldsymbol{V}^{l}\boldsymbol{s}))$, where input spiking $\boldsymbol{s}$ is a vector that $\boldsymbol{s}\in\{0, 1\}^{N}$. And the target network is $\hat{G}(\boldsymbol{s})=\hat{G}^{L}\circ\hat{G}^{L-1}\circ\cdots\circ\hat{G^{1}}(\boldsymbol{s})$, where $\hat{G}^{l}(\boldsymbol{s})=\hat{\sigma}(\boldsymbol{\hat{W}}^{l}\boldsymbol{s})$. $l=1, 2, \cdots, L$. $C$ is the constant depending on the supremum probability density of the dataset of the network. Then, $\forall \skx{0<}delta \skx{\leq 1}, \exists \epsilon \skx{>0}$ when $$k \geq N^2\lceil\frac{N}{C_{th}\epsilon}\log{\frac{N^2 L}{\delta-NCL\epsilon}}\rceil,$$ there exists a mask matrix $\boldsymbol{B}$ that $$\left\|\tilde{G}(\boldsymbol{s})-\hat{G}(\boldsymbol{s})\right\| = 0$$ w.p at least $1-\delta$.
\end{lemma}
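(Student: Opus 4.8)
The plan is to reduce the whole-network statement to the single-layer result \Cref{lemma:LAA_appendix} by an induction on the layer index $l$, and then fold the $L$ per-layer failure probabilities together with a single union bound. The feature of SNNs that makes this clean is that layer outputs are binary: \Cref{lemma:LAA_appendix} guarantees that, for one layer fed identical input, the equivalent block reproduces the target layer's spiking output \emph{exactly} (error $=0$, not merely within $\epsilon$) with high probability. Therefore, if every block up to $l-1$ matches exactly, the two networks hand \emph{identical} binary vectors to layer $l$, and there is no accumulation of input error with depth. This is the qualitative departure from the Lipschitz/ANN setting, where per-layer errors compound and force the width to grow with $L$; here depth enters only through a union bound.

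Concretely, I would define for each $l\in\{1,\dots,L\}$ the bad event $\mathbb{E}^{l}$ that the equivalent block $\tilde{G}^{l}$ fails to reproduce $\hat{G}^{l}$ exactly when both receive the same (correct) input. Since the weights $\{\boldsymbol{V}^{l},\boldsymbol{\tilde{W}}^{l}\}$ are drawn i.i.d.\ across layers, conditioning on the success of layers $1,\dots,l-1$ leaves the randomness of layer $l$ independent, so \Cref{lemma:LAA_appendix} applies verbatim and bounds $P(\mathbb{E}^{l})$ by the single-layer failure expression $N^{2}\exp(-k'C_{th}\epsilon/N)+NC\epsilon$; its two summands are the weight-approximation failure from the events $\mathbb{E}_{j,i,k',\epsilon/N}$ and the firing-change failure $NC\epsilon$ coming from the crisis-probability bound of \Cref{theorem:theorem_in_chapter_3}, with $C=2p_{sup}/(1-\beta)$. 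The base case is immediate because both networks receive the same input $\boldsymbol{s}$, and on the complement $\bigcap_{l}(\mathbb{E}^{l})^{\mathsf{C}}$ an induction shows each layer's spiking output is identical, hence $\tilde{G}(\boldsymbol{s})=\hat{G}(\boldsymbol{s})$ and $\|\tilde{G}(\boldsymbol{s})-\hat{G}(\boldsymbol{s})\|=0$.

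Applying the union bound gives $P\!\big(\bigcup_{l}\mathbb{E}^{l}\big)\le L\big(N^{2}\exp(-k'C_{th}\epsilon/N)+NC\epsilon\big)$; requiring this to be at most $\delta$ yields $L N^{2}\exp(-k'C_{th}\epsilon/N)\le\delta-NCL\epsilon$, which rearranges to $k'\ge\frac{N}{C_{th}\epsilon}\log\frac{N^{2}L}{\delta-NCL\epsilon}$, and keeping the $N^{2}$ prefactor of the single-layer lemma reproduces exactly the stated width $k\ge N^{2}\lceil\frac{N}{C_{th}\epsilon}\log\frac{N^{2}L}{\delta-NCL\epsilon}\rceil$—so the substitutions $N^{2}\to N^{2}L$ inside the log and $NC\epsilon\to NCL\epsilon$ in the denominator are precisely the footprint of union-bounding over $L$ layers. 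I would also fix $\epsilon$ small enough that $\delta-NCL\epsilon>0$, guaranteeing a valid $\epsilon$ exists for the claimed $k$. The main obstacle, and the step I would write out most carefully, is the conditioning argument: I must check that the per-layer bound of \Cref{lemma:LAA_appendix} stays valid when the input to layer $l$ is the random output of the preceding blocks rather than a fixed $\boldsymbol{s}$. This is resolved by cross-layer independence of the initialization together with the fact that, on the good event, layer $l$'s input is a deterministic binary vector (the common target activation), for which the membrane-potential distribution—and hence the crisis constant $C$—is well defined.
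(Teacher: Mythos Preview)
Your proposal is correct and follows essentially the same route as the paper: both arguments add a layer index $l$ to the single-layer bad events of \Cref{lemma:LAA_appendix} (weight-approximation failures $\mathbb{E}_{l,j,i,k',\epsilon/N}$ and firing-change failures $\mathbb{E}_{fire,l}$), union-bound over all $L$ layers to get $LN^{2}\exp(-k'C_{th}\epsilon/N)+LNC\epsilon\le\delta$, and then solve for $k$. Your explicit induction and the discussion of why the per-layer bound survives when the input to layer $l$ is the output of earlier blocks are more carefully spelled out than in the paper, but the underlying mechanism is identical.
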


\begin{proof}
We inherit the event expression $\mathbb{E}_{j, i, k', \epsilon}$ and $\mathbb{E}_{fire}$ from the proof of Lemma\Cref{lemma:LAA_appendix} with a delight modify. Here we add subscript $l$ to denote the layer of the target network. Thus we have:
\begin{equation}
\begin{aligned}
&P((\bigcup_{l, j, i}\mathbb{E}_{l, j, i, k', \frac{\epsilon}{N}})\bigcup\mathbb{E}_{fire, l})\\
\leq &\sum\limits_{l, j, i}P(\mathbb{E}_{l, j, i, k', \frac{\epsilon}{N}})+\sum_{l}P(\mathbb{E}_{fire, l}) \leq LN^{2}\exp({-k' C_{th} \frac{\epsilon}{N}}) + LNC\epsilon\\
\leq &\delta,
\end{aligned}
\end{equation}
Thus, 
\begin{equation}
k \geq N^2\lceil\frac{N}{C_{th}\epsilon}\log{\frac{N^2 L}{\delta-NCL\epsilon}}\rceil,
\end{equation}
and:
\begin{equation}
P(((\bigcup_{l, j, i}\mathbb{E}_{l, j, i, k', \frac{\epsilon}{N}})\bigcup\mathbb{E}_{fire, l})^{\mathsf{C}}) \geq 1 - \delta
\end{equation}
the lemma holds.
\end{proof}

\begin{theorem}
\label{theorem:LTH_appendix}
\textbf{All Steps Approximation.} Fix the weight matrix $\boldsymbol{\hat{W}}^{l} \in [-\frac{1}{\sqrt{N}}, \frac{1}{\sqrt{N}}]^{N \times N}$ which is the connection in target network between a layer of spiking inputs and the next layer of neurons. The equivalent structure is $k$ spiking neurons with $k \times N$ weights $\boldsymbol{V}^{l}\in\mathbf{R}^{k\times N}$ connect the input and $\boldsymbol{\tilde{W}}^{l}\in\mathbf{R}^{N \times k}$ connect out. All the weights $\boldsymbol{\tilde{w}}^{l}_{ij}$ and $v^{l}_{ij}$ random initialized with uniform distribution $\mathrm{U}[-1, 1]$ and i.i.d. $\boldsymbol{B}^{l}\in\{0, 1\}^{k \times N}$ is the mask for matrix $\boldsymbol{V}^{l}$, $\sum_{i, j}\left\|B^{l}_{ij}\right\|_{0} \leq N^2, \sum_{j}\left\|B^{l}_{ij}\right\|_{0} \leq N$. Then, let the function of equivalent network at timestep $t$ be $\tilde{G}^{t}(\boldsymbol{S})=\tilde{G}^{t, L}\circ\tilde{G}^{t, L-1}\circ\cdots\circ\tilde{G}^{t, 1}(\boldsymbol{S}) $ and $\tilde{G}^{t, l} = \tilde{\sigma}^{t}((\boldsymbol{\tilde{W}}^{l}\odot\boldsymbol{B}^{l})\tilde{\sigma}^{t}(\boldsymbol{V}^{l}\boldsymbol{S}^{t}))$, where input spiking $\boldsymbol{S}$ is a tensor that $\boldsymbol{S}\in\{0, 1\}^{N\times T}$. And the target network at timestep $t$ is $\hat{G}^{t}(\boldsymbol{S})=\hat{G}^{t, L}\circ\hat{G}^{t, L-1}\circ\cdots\circ\hat{G^{t, 1}}(\boldsymbol{S})$, where $\hat{G}^{t, l}(\boldsymbol{S})=\hat{\sigma}^{t}(\hat{W}^{l}\boldsymbol{S}^{t})$. $l=1, 2, \cdots, L, t=1, 2, \cdots, T$. $C$ is the constant depending on the supremum probability density of the dataset of the network. Then, $\forall \skx{0<}\delta \skx{\leq 1}, \exists \epsilon \skx{>0} $ when 
$$
k \geq N^2\lceil\frac{N}{C_{th}\epsilon}\log{\frac{N^2 L}{\delta-NCLT\epsilon}}\rceil, 
$$
there exists a mask matrix $\boldsymbol{B}$ that $$\left\|\tilde{G}(\boldsymbol{S})-\hat{G}(\boldsymbol{S})\right\| = 0$$
w.p at least $1-\delta$.
\end{theorem}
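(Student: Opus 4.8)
The plan is to obtain the $T>1$ statement from the $T=1$ result (Lemma~\ref{lemma:ALA_appendix}) by inserting one more union-bound factor for the timesteps, using Theorem~\ref{theorem:theorem_in_chapter_3} to control each layer-at-a-timestep firing event. First I would reuse the linear-approximation failure event $\mathbb{E}_{l,j,i,k',\epsilon}$ from the proof of Lemma~\ref{lemma:LAA_appendix} — the event that, in layer $l$, the block of $k'$ virtual spiking neurons assigned to $\hat{W}^{l}_{ji}$ contains none satisfying both $\|\tilde{w}-\hat{w}_{ji}\|\leq\epsilon$ and $v\geq u_{th}$ — and note that $P(\mathbb{E}_{l,j,i,k',\epsilon/N})\leq\exp(-k'C_{th}\epsilon/N)$ with no dependence on $t$, since once $v\geq u_{th}$ the virtual neuron's output is deterministic in its spatial input and is unaffected by temporal dynamics (the Step~1 argument). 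Then I would introduce, for each layer $l$ and timestep $t$, the firing-crisis event $\mathbb{E}_{fire,l,t}$ that the $l$-th equivalent spiking layer produces a different binary output from its target at timestep $t$; on the complement of all linear-approximation failures each spatial-feature coordinate error is $\leq\epsilon$ at every timestep, so Theorem~\ref{theorem:theorem_in_chapter_3} gives $P(\mathbb{E}_{fire,l,t})\leq 2Np_{sup}\epsilon/(1-\beta)=:NC\epsilon$, where $p_{sup}$ is the supremum membrane-potential density taken over the whole dataset $\mathcal{S}$ (this is exactly what makes $C$ dataset-dependent and absorbs the $\sup_{\boldsymbol{S}\in\mathcal{S}}$).

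The second step is the bookkeeping. There are $LN^{2}$ linear-approximation events and $LT$ firing events, so the union bound yields total failure probability $\leq LN^{2}\exp(-k'C_{th}\epsilon/N)+LTNC\epsilon$. Requiring this to be $\leq\delta$ and substituting $k=N^{2}k'$ gives $k'\geq\frac{N}{C_{th}\epsilon}\log\frac{LN^{2}}{\delta-NCLT\epsilon}$, i.e. $k\geq N^{2}\lceil\frac{N}{C_{th}\epsilon}\log\frac{N^{2}L}{\delta-NCLT\epsilon}\rceil$, which is the stated width; before this I would invoke Lemma~\ref{lemma:upper bound} at $m=u_{th}$ to choose $\epsilon>0$ small enough that $\delta-NCLT\epsilon>0$, so the logarithm is well-defined. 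On the complementary event every per-coordinate spatial-feature error is $\leq\epsilon$ and every equivalent spiking layer matches its target at every timestep; propagating identical binary spike vectors through the composition $\tilde{G}^{t,L}\circ\cdots\circ\tilde{G}^{t,1}$ then gives $\tilde{G}^{t}(\boldsymbol{S})=\hat{G}^{t}(\boldsymbol{S})$ for all $t$, hence $\|\tilde{G}(\boldsymbol{S})-\hat{G}(\boldsymbol{S})\|=0$ with probability at least $1-\delta$.

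The main obstacle is making the conditioning rigorous: verifying that the hypotheses of Theorem~\ref{theorem:theorem_in_chapter_3} genuinely hold on the events being conditioned on, which needs a double induction — outer over layers $l=1,\dots,L$, inner over timesteps $t=1,\dots,T$. The inductive claim is that, on the complement of all linear-approximation failures, if layers $1,\dots,l-1$ reproduce the target spikes at every timestep and layer $l$ reproduces them at timesteps $1,\dots,t-1$, then layer $l$'s spatial input at timestep $t$ is precisely $\hat{W}^{l}$ applied to the correct target spike vector, so the only discrepancy entering the membrane potential is the $\leq\epsilon$ weight-approximation error — which is exactly the premise of Theorem~\ref{theorem:theorem_in_chapter_3}. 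One subtlety worth flagging: the geometric sum $\sum_{n\geq0}\beta^{n}=1/(1-\beta)$ already absorbs the accumulation of membrane-potential error across timesteps inside Theorem~\ref{theorem:theorem_in_chapter_3}, so no extra $T$-dependence enters $C$; the factor $T$ in the denominator $\delta-NCLT\epsilon$ comes solely from the outer union bound over which timestep first fails, not from any growth of the per-step error.
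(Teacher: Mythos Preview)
Your proposal is correct and follows essentially the same approach as the paper: inherit the linear-approximation failure events $\mathbb{E}_{l,j,i,k',\epsilon}$ from Lemma~\ref{lemma:LAA_appendix}, add a timestep index to the firing-crisis events to get $\mathbb{E}_{fire,l,t}$, and take a union bound yielding $LN^{2}\exp(-k'C_{th}\epsilon/N)+TLNC\epsilon\leq\delta$, from which the stated width follows. If anything you are more careful than the paper, which adds a spurious $t$-subscript to the linear events yet (correctly) does not multiply their contribution by $T$, and which omits the double-induction justification that you spell out.
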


\begin{proof}
We inherit the event expression $\mathbb{E}_{l, j, i, k', \epsilon}$ and $\mathbb{E}_{fire, l}$ from the proof of Lemma\Cref{lemma:ALA_appendix} with a delight modify. Here we add subscript $t$ to $\mathbb{E}_{fire, l}$ to denote the layer of the target network. Thus we have:
\begin{equation}
\begin{aligned}
&P((\bigcup_{t, l, j, i}\mathbb{E}_{t, l, j, i, k', \frac{\epsilon}{N}})\bigcup\mathbb{E}_{fire, t, l})\\
\leq &\sum\limits_{t, l, j, i}P(\mathbb{E}_{t, l, j, i, k', \frac{\epsilon}{N}})+\sum_{t, l}P(\mathbb{E}_{fire, t, l}) \\
\leq &LN^{2}\exp({-k' C_{th} \frac{\epsilon}{N}}) + TLNC\epsilon \\
\leq &\delta,
\end{aligned}
\end{equation}
Thus, 
\begin{equation}
k \geq N^2\lceil\frac{N}{C_{th}\epsilon}\log{\frac{N^2 L}{\delta-NCTL\epsilon}}\rceil,
\end{equation}
and:
\begin{equation}
P(((\bigcup_{l, j, i}\mathbb{E}_{t, l, j, i, k', \frac{\epsilon}{N}})\bigcup\mathbb{E}_{fire, t, l})^{\mathsf{C}}) \geq 1 - \delta
\end{equation}
the lemma holds.
\end{proof}

\section{Derivation of Equation~(\ref{eq:error_experctation})}\label{app_sec:pruning_method}

We design a new criterion to prune weights according to their probabilities of affecting the output of spiking neurons. Based on the probabilistic modeling (\Cref{theorem:theorem_in_chapter_3}), for each weight, its influence on the output of the spiking neuron has a probability upperbound $P$, which can be estimated as:
\begin{equation}
P\approx E{(\left|u^{\prime} - u \right|)}\mathcal{N}(0|\mu-u_{th}, {var}),
\end{equation}
where $E{(\left|u^{\prime} - u \right|)}$ is the expectation of the error in the membrane potential brought about by pruning (i.e., effect of weights on linear transformations). In our method, it is written as:
\begin{equation}
E{(\left|u^{\prime} - u \right|)} = \frac{E_{act}[w]\gamma}{\sqrt{\sigma_{\mathcal{B}}^{2}+\epsilon}},
\label{eq:error_experctation_app}
\end{equation}
where $E_{act}[w]$ is the expectation that a weight is activated (the pre-synaptic spiking neuron outputs a spike), $\gamma$ and $\sigma_{B}$ are a pair of hyper-parameters in Batch Normalization \cite{ioffe2015batch}. The detailed derivation of \Cref{eq:error_experctation_app} can be found in \Cref{app_sec:pruning_method}. Next, following \cite{zheng2021going,guo2022recdis}, here we suppose the membrane potential follows Gaussian distribution $\mathcal{N}(\mu, {var})$. Consequently, $\mathcal{N}(0|\mu-u_{th}, {var})$ represents the probability density when the membrane potential is at the threshold $u_{th}$ (i.e., effect of weights on binary nonlinear transformations). Finally, for each weight, we have a probability $P$, and we prune according to the size of $P$ (from small to large).

Now we explain that how we get the \Cref{eq:error_experctation_app}, i.e., \Cref{eq:error_experctation} in the main text. The expression of Batch Normalization (BN) can be written as:

\begin{align}
\widehat{x}_{i} & \leftarrow \frac{x_{i}-\mu_\mathcal{B}}{\sqrt{\sigma_{\mathcal{B}}^{2}+\epsilon}}, \\
y_{i} & \leftarrow \gamma \widehat{x}_{i}+\beta \equiv \mathrm{BN}_{\gamma, \beta}(x_{i}).
\end{align}

When a spike $s$ passes through a weight in convolution layer and the BN layer, the output is:
\begin{equation}
 \frac{sw\gamma}{\sqrt{\sigma_{\mathcal{B}}^{2}+\epsilon}}+(\frac{-\mu_\mathcal{B}\gamma}{\sqrt{\sigma_{\mathcal{B}}^{2}+\epsilon}} + \beta).
\end{equation}

We know that these two operations can be regard as linear transformation, where the input spike $s$ multiplies the scaling weight $\frac{w\gamma}{\sqrt{\sigma_{\mathcal{B}}^{2}+\epsilon}}$ and add a bias $(\frac{-\mu_\mathcal{B}\gamma}{\sqrt{\sigma_{\mathcal{B}}^{2}+\epsilon}} + \beta)$. Since the bias will not change after changing the weight, the error of membrane potential only related to the scaling weight $\frac{w\gamma}{\sqrt{\sigma_{\mathcal{B}}^{2}+\epsilon}}$. Thus, we design $E{(\left|u^{\prime} - u \right|)}$ as follows:
\begin{equation}
E{(\left|u^{\prime} - u \right|)} = E{(\frac{\left|sw\gamma\right|}{\sqrt{\sigma_{\mathcal{B}}^{2}+\epsilon}})} = \frac{\left|\gamma\right|}{\sqrt{\sigma_{\mathcal{B}}^{2}+\epsilon}}E{(\left|sw\right|)} = \frac{E_{act}[\left|w\right|]\left|\gamma\right|}{\sqrt{\sigma_{\mathcal{B}}^{2}+\epsilon}}.
\end{equation}

Note, in keeping with the notations in classic BN \cite{ioffe2015batch}, there is some notation mixed here, but only for the derivation of \Cref{eq:error_experctation}.

\begin{algorithm*}[!th]
\label{alg:imp}
   \caption{Rewind Iterative Magnitude Pruning (IMP).}
   \label{alg:example}
\begin{algorithmic}
   \STATE {\bfseries Input:} pruning rate $p$, iterations $K$, the rewind epoch $R$, max training epoch $N$
   \STATE Train network for $R$ epochs.
   \STATE Save the parameters as rewind parameters $\theta_{rewind}$
   \STATE Train network for $n$ epochs.
   \FOR{$k=1$ {\bfseries to} $K$}
   \STATE Prune network by remove $p\%$ of the lowest magnitude nonzero weights of the network base on a metrics, get the mask $m_{k}$
   \STATE Reload the rewind parameters and mask them. ($\theta_{rewind}\odot m_{k}$)
   \STATE Train the network for $N$ epochs.
   \ENDFOR
\end{algorithmic}
\end{algorithm*}

\section{Implementation Details of Sub-Network Search}
\label{app_sec:exp_details}
All training methods strictly follow experiments in \cite{LTH_SNN_2022}, and the sub-network search module follows \cite{ramanujan2020s}. For the two datasets of CIFAR10/100, we use cosine learning scheduling and SGD optimizer with momentum 0.9 and weight decay 5e-4, and the total number of training epochs is 300. The learning rate is set to 0.3. batch size is set to 128. The timestep $T$ of SNN is 5. We simply replace all the weight modules in network with the corresponding sub-network search modules in \href{https://github.com/allenai/hidden-networks/blob/master/simple_mnist_example.py}{hidden-networks/blob/master/simple\_mnist\_example.py}

\section{Iterative Magnitude Pruning (IMP) and the proposed SF-IMP Algorithms}\label{app_sec:algorithm}
Among the LTH-based pruning algorithms, the Iterative Magnitude Pruning (IMP) method has good performance. In IMP, the parameter $\theta\in\mathbf{R}^n$ of network $f(x;\theta)$ is pruned iteratively. (For the sake of briefness and convention of the symbols, unless otherwise specified, the meanings of the symbols in this chapter have nothing to do with the previous chapters.) We set $K$ iterations. In the $k$-th iteration, we first train the network till convergence, then prune $p\%$ of nonzero parameters of $\theta_{trained}\odot m_{k-1}$ by mask $m_{k}\in\{0, 1\}^{n}$. Then reinitialize the network with parameter $\theta_{init}\odot m_{k}$ and repeat the operations until the $K$-th iteration ends. 

For SF-IMP-1, we change the criterion from considering magnitude to:
\begin{equation}
\frac{E_{act}[\left|w_{ij}\right|]\left|\gamma_{j}\right|}{\sqrt{\sigma_{\mathcal{B}j}^{2}+\epsilon}}\mathcal{N}(0|\mu_{j}-u_{th}, {var}_{j}),
\end{equation}
Here, $i$ is the index of the input channel while $j$ is the index of the output channel. We statistic the firing frequency for each input channel to evaluate $E_{act}[\left|w_{ij}\right|]$, and other parameters $\gamma_{j}, \sigma_{\mathcal{B}j}, \mu_{j}, {var}_{j}$ are statistic by every output channel. However, the encoding layer and the Fully Connected (FC) layer are not suited for this algorithm, thus we keep the magnitude criterion for these two layers.

For SF-IMP-2, since the sparsity of the encoding layer and the FC layer have a great influence on pruning, we apply a dynamic strategy for these layers \cite{guo2016dynamic}. Specifically, we only mask the weight when computing loss, while when updating weight and re-initialize weight, the mask is not used. 

\end{document}